%% file: iclr2027_conference.tex
\documentclass{article} 
\usepackage{iclr2027_conference,times}

\input{math_commands.tex}

\usepackage{booktabs}
\usepackage{multirow}
\usepackage{makecell}
\usepackage{tabularx}
\usepackage{array}
\usepackage{graphicx}
\usepackage{pifont}
\usepackage[table]{xcolor}

\newcolumntype{C}[1]{>{\centering\arraybackslash}p{#1}}
\usepackage{hyperref}
\usepackage{url}
\usepackage{graphicx} 
\usepackage{booktabs}
\usepackage[table]{xcolor}
\usepackage{amsthm}
\usepackage{multicol}
\usepackage{multirow}
\newtheorem{theorem}{Theorem}
\usepackage{wrapfig}
\usepackage{booktabs}
\usepackage{multirow}
\usepackage{pifont}
\usepackage{wrapfig}
\usepackage{makecell}
\usepackage{array}
\newcolumntype{C}[1]{>{\centering\arraybackslash}m{#1}}
\renewcommand{\iclrruler}[1]{}

\title{SpikingVLA: Asynchronous Spiking Vision-Language-Action Models}

\author{\textbf{Jingya Wang}$^{1}$, \textbf{Dehao Zhang}$^{1}$, \textbf{Shuai Wang}$^{1}$, \textbf{Malu Zhang}$^{1,2}$\thanks{Corresponding author: maluzhang@uestc.edu.cn}, \textbf{Yang Yang}$^{1}$, \textbf{Haizhou Li}$^{2,3}$ \\ [0.3em]
  $^{1}$University of Electronic Science and Technology of China,\\ [0.3em]
    $^{2}$Shenzhen Loop Area Institute,
  $^{3}$The Chinese University of Hong Kong (Shenzhen) 
}

\begin{document}

\maketitle

\begin{abstract}
ANN-to-SNN conversion offers a practical route toward energy-efficient spiking Vision-Language-Action (VLA) models by bypassing the substantial cost of training large-scale SNNs from scratch. However, existing methods often require many timesteps to maintain competitive performance, resulting in substantial inference latency for real-time VLA deployment. To address this challenge, we introduce SpikingVLA, an ANN-to-SNN conversion framework that enables accurate and low-latency spiking VLA inference. Specifically, we propose a Dendritic Integrate-and-Fire (DIF) neuron that alleviates channel-wise activation outliers through dendritic mixing and adaptive somatic firing, enabling accurate ANN-to-SNN conversion with fewer timesteps. Building on DIF neurons, we further introduce an asynchronous execution mechanism that overlaps temporal computation across VLA components, reducing synchronization overhead and latency. Extensive experiments demonstrate that SpikingVLA achieves competitive navigation performance with substantially improved inference efficiency. Compared with existing spiking VLA methods, SpikingVLA improves SR and SPL by 11.9\% and 12.6\%, respectively, while reducing first-action latency by 11.2$\times$. These results establish SpikingVLA as a practical framework for deploying pretrained VLA models with high-performance and low-latency spiking inference.
\end{abstract}

\section{Introduction}
Vision-Language-Action (VLA) models extend large-scale vision-language models to embodied decision making by grounding multimodal observations and language instructions into   actions~\citep{brohan2023rt,team2025gemini}. Leveraging the perception and reasoning capabilities acquired through large-scale multimodal pretraining, VLA models have demonstrated strong performance on a broad range of embodied tasks, including robotic manipulation and navigation~\citep{black2024pi_0,cheng2024navila}. However, their reliance on large vision-language backbones incurs substantial computational overhead, hindering deployment on resource-constrained robotic platforms~\citep{kim2025fine,shukor2025smolvla}. Although recent studies have explored model-compression techniques to reduce these resource demands~\citep{xu2026qvla}, these methods generally preserve dense execution, leaving substantial redundant computation during inference~\citep{xie2026turbovla, yue2024deer}. This motivates computational paradigms that directly reduce redundant inference operations.

Spiking Neural Networks (SNNs) provide an event-driven computing paradigm, where information is represented through discrete spike events~\citep{maass1997networks,izhikevich2003simple,gerstner2002spiking}. By exploiting sparse neural activity, SNNs can reduce redundant computation and energy consumption during inference~\citep{deng2020tianjic,davies2018loihi,deng2020rethinking,ma2017darwin}. However, training large-scale SNNs from scratch remains challenging due to the difficulty of optimizing temporal spiking dynamics~\citep{wu2018spatio, zhang2026dendritic, fang2023parallel}. ANN-to-SNN conversion provides a practical alternative by transforming pretrained ANNs into spiking models while preserving their learned representations~\citep{rueckauer2017conversion,li2024error}. By leveraging pretrained knowledge, these conversion methods avoid costly end-to-end SNN training and enable scalable adaptation of large neural models to event-driven spiking computation~\citep{li2021free,bu2023optimal,wang2026training,bu2025inference}.

\begin{figure}[tbp]
    \centering
    \includegraphics[width=0.98\linewidth]{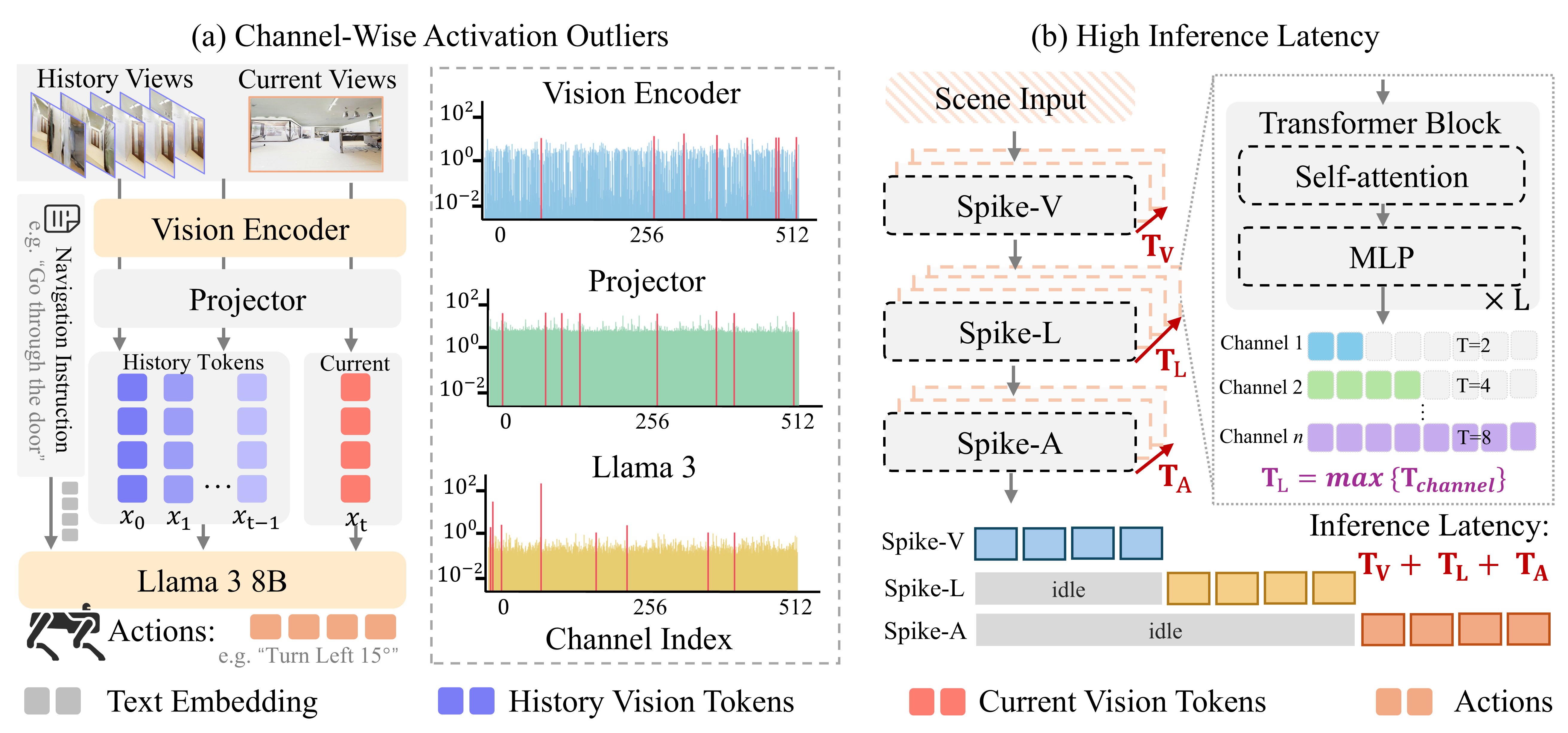}
    \caption{(a) Pronounced activation outliers across VLA components increase spike encoding errors at limited timesteps, degrading conversion accuracy. (b) Non-uniform temporal horizons across components introduce synchronization stalls and idle periods, increasing inference latency.}
    \label{fig:overview}
    \vspace{-15pt}
\end{figure}

Recent work has demonstrated the feasibility of ANN-to-SNN converted VLA models for embodied navigation~\citep{song2026spikevla}. However, scaling spiking VLA models to full VLA systems while preserving task performance and inference efficiency remains challenging. First, VLA components exhibit pronounced channel-wise activation outliers. As shown in Fig.~\ref{fig:overview}(a), the vision encoder, projector, and language backbone present substantially different activation distributions, where a small subset of channels dominates the activation range. These outliers increase spike encoding errors under limited timesteps, degrading conversion accuracy. Moreover, VLA components exhibit non-uniform temporal horizons during spiking inference. As shown in Fig.~\ref{fig:overview}(b), the vision, language, and action branches operate over $T_{\mathrm{V}}$, $T_{\mathrm{L}}$, and $T_{\mathrm{A}}$ timesteps, respectively. These differences in temporal horizons force downstream components to wait for upstream completion, introducing synchronization stalls and increasing end-to-end inference latency.

To address these limitations, we introduce SpikingVLA, an efficient ANN-to-SNN conversion framework for accurate and low-latency embodied navigation. Specifically, we propose a Dendritic Integrate-and-Fire (DIF) neuron that mitigates channel-wise activation outliers through dendritic mixing and adaptive somatic firing, enabling accurate conversion with fewer timesteps. Building on DIF, we further develop an asynchronous execution mechanism that enables VLA components to process intermediate spike responses without requiring full temporal synchronization, reducing synchronization overhead and inference latency. Extensive experiments on vision-language navigation benchmarks and closed-loop robotic navigation demonstrate that SpikingVLA maintains near-lossless navigation performance relative to its ANN counterpart while substantially improving inference efficiency. Our main contributions are summarized as follows:
\begin{itemize}
    \item We identify two key bottlenecks in spiking VLA models: pronounced activation outliers degrade conversion accuracy at limited timesteps, while non-uniform temporal horizons across VLA components introduce synchronization stalls and increase inference latency.
    \item We introduce a DIF-based framework for accurate and low-latency spiking VLA inference. DIF mitigates channel-wise activation outliers through dendritic redistribution and adaptive somatic firing, enabling accurate conversion with fewer timesteps. Building on DIF, we further develop an asynchronous execution mechanism that propagates intermediate responses across VLA components, reducing synchronization stalls and inference latency.
    \item Extensive experiments across multiple vision-language navigation benchmarks and closed-loop robotic settings demonstrate that SpikingVLA achieves competitive navigation performance with substantially improved inference efficiency. Compared with existing spiking VLA methods, SpikingVLA improves SR and SPL by 11.9\% and 12.6\%, respectively, while asynchronous execution accelerates first-action generation by 11.2$\times$.
\end{itemize}
\section{Related Works}
\textbf{Vision-Language-Action Models:}
VLA models leverage representations learned during vision-language pretraining for robotic control~\citep{brohan2023rt,ma2024survey}. RT-2~\citep{brohan2023rt} casts robot actions as discrete tokens, whereas OpenVLA~\citep{kim2024openvla} introduces an open-source model trained on diverse cross-embodiment data. The paradigm has further been extended to continuous control and navigation~\citep{black2024pi_0,cheng2024navila}. To improve deployment efficiency, OpenVLA-OFT~\citep{kim2025fine} adopts parallel decoding and action chunking, SmolVLA~\citep{shukor2025smolvla} uses a compact architecture, EfficientVLA~\citep{yang2026efficientvla} reduces redundant computation, and QVLA and QuantVLA~\citep{zhang2026quantvla} employ low-bit quantization. Despite these advances, low-latency VLA inference while preserving task performance remains challenging~\citep{liu2024robomamba}.

\textbf{Spiking Neural Networks:}
SNNs process information through discrete spike events, enabling sparse and event-driven computation~\citep{yao2023spikedriven,wang2025spiking,xiao2025rethinking}. Spikformer~\citep{zhou2022spikformer} introduces Spiking Self-attention for visual recognition, while SpikeGPT~\citep{zhu2023spikegpt} extends spiking computation to generative language modeling. To reuse pretrained models, SpikeZIP-TF~\citep{you2024spikeziptf} converts pretrained Transformers into spiking networks without costly retraining. More recent studies further extend spiking computation to multimodal models: SpikeMLLM~\citep{xu2026spikemllm} introduces modality-specific temporal scales, while SpikeVLA applies spike-based processing to VLA tasks. However, existing methods primarily focus on architectural conversion or temporal adaptation, while the heterogeneous activation distributions and temporal imbalance across VLA components remain insufficiently explored.

\section{Preliminary}
\subsection{ANN-to-SNN conversion}
Conventional ANN-to-SNN conversion typically relies on rate coding, where ANN activations are approximated by the temporal firing rates of spiking neurons~\citep{rueckauer2017conversion}. This temporal rate representation often requires a sufficiently long simulation horizon for accurate approximation, while nonlinear operators in Transformer architectures further complicate direct conversion~\citep{jiang2024spatio}. To address this limitation, differential coding~\citep{huang2025differential} represents temporal variations in operator responses instead of repeatedly encoding complete activations.
Specifically, for layer $l$, the differential response and its accumulated representation are defined as follows:
\begin{equation}
x^l[t]
=
t\!\left[
F^l(r^{l-1}[t])-F^l(r^{l-1}[t-1])
\right],
\qquad
r^l[t]
=
r^l[t-1]+\frac{x^l[t]}{t},
\label{eq:differential-coding}
\end{equation}
where $F^l(\cdot)$ denotes the layer operation, $x^l[t]$ is the temporal output increment, and $r^l[t]$ represents the accumulated output at timestep $t$. This formulation enables layer responses to be progressively reconstructed from temporal increments. Since each increment depends only on the current and previous accumulated states, the output can be computed online without requiring future temporal information, enabling efficient timestep-wise propagation~\citep{huang2025differential}.

\subsection{Multi-Threshold Neurons}
\label{MTNeuron}
Differential conversion employs multi-threshold (MT) neurons to encode temporal response increments with multiple signed firing levels~\citep{huang2024towards, huang2025differential}. This multi-level representation improves temporal resolution while preserving event-driven computation~\citep{zhang2025toward, chen2025las}. Specifically, for layer $l$, the neuronal dynamics are defined as follows:
\begin{equation}
    \mathbf{m}^{l}[t]
    =
    \mathbf{v}^{l}[t-1]+\mathbf{I}^{l}[t],
    \qquad
    \mathbf{z}^{l}[t]
    = \operatorname{MT}_{\theta^{l},n}\!\left(\mathbf{m}^{l}[t]\right),
    \qquad
    \mathbf{v}^{l}[t]
    =
    \mathbf{m}^{l}[t]-\mathbf{z}^{l}[t].
    \label{eq:mt_dynamics}
\end{equation}
Here, $\mathbf{m}^{l}[t]$ and $\mathbf{v}^{l}[t]$ denote the membrane potentials before and after spike emission, respectively, $\mathbf{I}^{l}[t]$ represents the incoming differential response, and $\mathbf{z}^{l}[t]$ denotes the emitted signed spike.
The element-wise operator $\operatorname{MT}_{\theta^{l},n}$ employs a symmetric set of firing levels,
$\Lambda^{l}=\{\pm\theta^{l}/2^{k-1}\mid k=1,\ldots,n\}$,
where $\theta^{l}$ denotes the maximum firing amplitude and $n$ specifies the number of levels for each polarity. For membrane potentials satisfying $|\mathbf{m}^{l}[t]|<\theta^{l}/2^{n-1}$, the operator outputs zero; otherwise, the response is assigned to the nearest level in $\Lambda^{l}$.  
For $n=1$, the MT neuron degenerates to a signed IF neuron with thresholds $\pm\theta^{l}$~\citep{hu2024toward}.

\section{Problem Analysis}
\label{sec:problem}
Despite the promising efficiency benefits of ANN-to-SNN conversion, extending existing methods to VLA models remains challenging due to activation imbalance across multimodal components and temporal synchronization overhead during spiking inference. First, VLA components exhibit pronounced channel-wise activation outliers, which increase spike encoding errors at limited timesteps and degrade conversion accuracy. Second, non-uniform temporal horizons across channels and components introduce synchronization stalls, increasing end-to-end inference latency.
\begin{figure}[!htbp]
    \centering
    \includegraphics[width=0.98\linewidth]{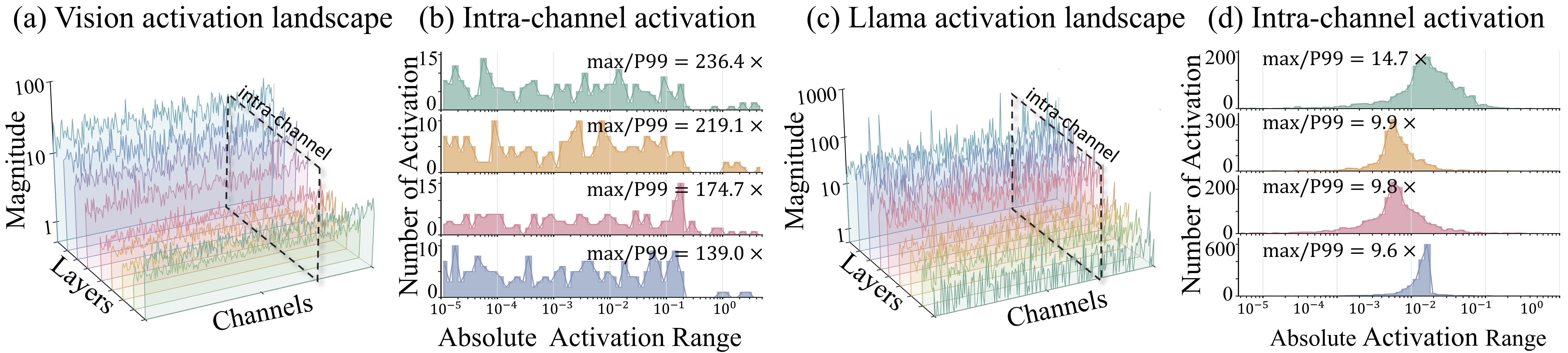}
    \caption{Channel-wise activation imbalance across VLA components.}
    \vspace{-10pt}
    \label{fig:problem}
\end{figure}
\subsection{Channel-Wise Activation Outliers}
VLA models exhibit pronounced channel-wise activation outliers across different components~\citep{zeng2026janusvln}. As shown in Fig.~\ref{fig:problem}(a) and Fig.~\ref{fig:problem}(c), the vision encoder and LLaMA decoder contain a small subset of channels with substantially larger activation magnitudes. These outlier channels dominate the activation scale, increasing spike encoding errors under limited timesteps and making accurate conversion more difficult. Existing ANN-to-SNN conversion methods~\citep{huang2024towardsecmt} typically address such activation imbalance through threshold calibration. Specifically, the threshold of channel $c$ in module $m$ is optimized by minimizing the reconstruction error:
\begin{equation}
\theta_{m,c}^{\star}
=
\arg\min_{\theta>0}
\int
\left[
x-
\frac{\theta}{N}
\operatorname{clip}
\left(
\left\lfloor
\frac{Nx}{\theta}+\frac{1}{2}
\right\rfloor,
0,N
\right)
\right]^2
p_{m,c}(x)\,dx,
\qquad
N\simeq 2^nT,
\end{equation}
where \(p_{m,c}(x)\) denotes the activation distribution of channel \(c\) in module \(m\), \(T\) is the timestep budget, and \(n\) determines the number of firing levels. For a fixed \(T\), a larger \(n\) provides finer spike representation by increasing the available firing levels. However, as shown in Fig.~\ref{fig:problem}(b) and Fig.~\ref{fig:problem}(d), the activation ranges remain strongly long-tailed, with a small subset of channels dominating the overall scale. Increasing the firing resolution therefore does not remove this range imbalance. As a result, extreme channels continue to constrain accurate spike encoding under limited timesteps.

\subsection{Temporal Synchronization Overhead}
VLA models require low response latency for real-time closed-loop interaction~\citep{liu2024robomamba}. Existing spiking VLA methods often assign different temporal horizons across channels and modules to accommodate heterogeneous activation dynamics~\citep{song2026spikevla}. However, these heterogeneous horizons
lead to mismatched completion times across channels. Since each layer can proceed only after all channels complete, its execution time is determined by the channel requiring the longest temporal horizon. Let $T_{m,l,c}$ denote the temporal horizon of channel $c$ in layer $l$ of module $m$.
The resulting layer latency is therefore defined as follows:
\begin{equation}
T_{m,l}^{\mathrm{exec}}=\max_{c}T_{m,l,c},\qquad
\mathcal{W}_{m,l}=\sum_{c=1}^{C_{m,l}}\!\left(T_{m,l}^{\mathrm{exec}}-T_{m,l,c}\right),\qquad
\mathcal{L}_{\mathrm{first}}=\sum_{m\in\mathcal{M}}\sum_{l=1}^{L_m}\tau_{m,l}T_{m,l}^{\mathrm{exec}},
\end{equation}
where $C_{m,l}$ denotes the number of channels, $\mathcal{M}=\{\mathrm{V},\mathrm{L},\mathrm{A}\}$ represents the
vision, language, and action modules, and $\tau_{m,l}$ denotes the execution time per timestep. As illustrated in Fig.~\ref{fig:overview}(b), the latency of each layer is determined by the channel with the longest temporal horizon. Channels that complete earlier remain idle until this horizon is reached, resulting in an accumulated idle budget $\mathcal{W}_{m,l}$. Consequently, shortening the horizons of non-critical channels does not reduce the layer latency unless the maximum horizon is also reduced. These synchronization
delays accumulate across successive layers and modules and ultimately increase the
time-to-first-action (TTFA).

\begin{figure}[!htbp]
    \centering
    \includegraphics[width=0.95\linewidth]{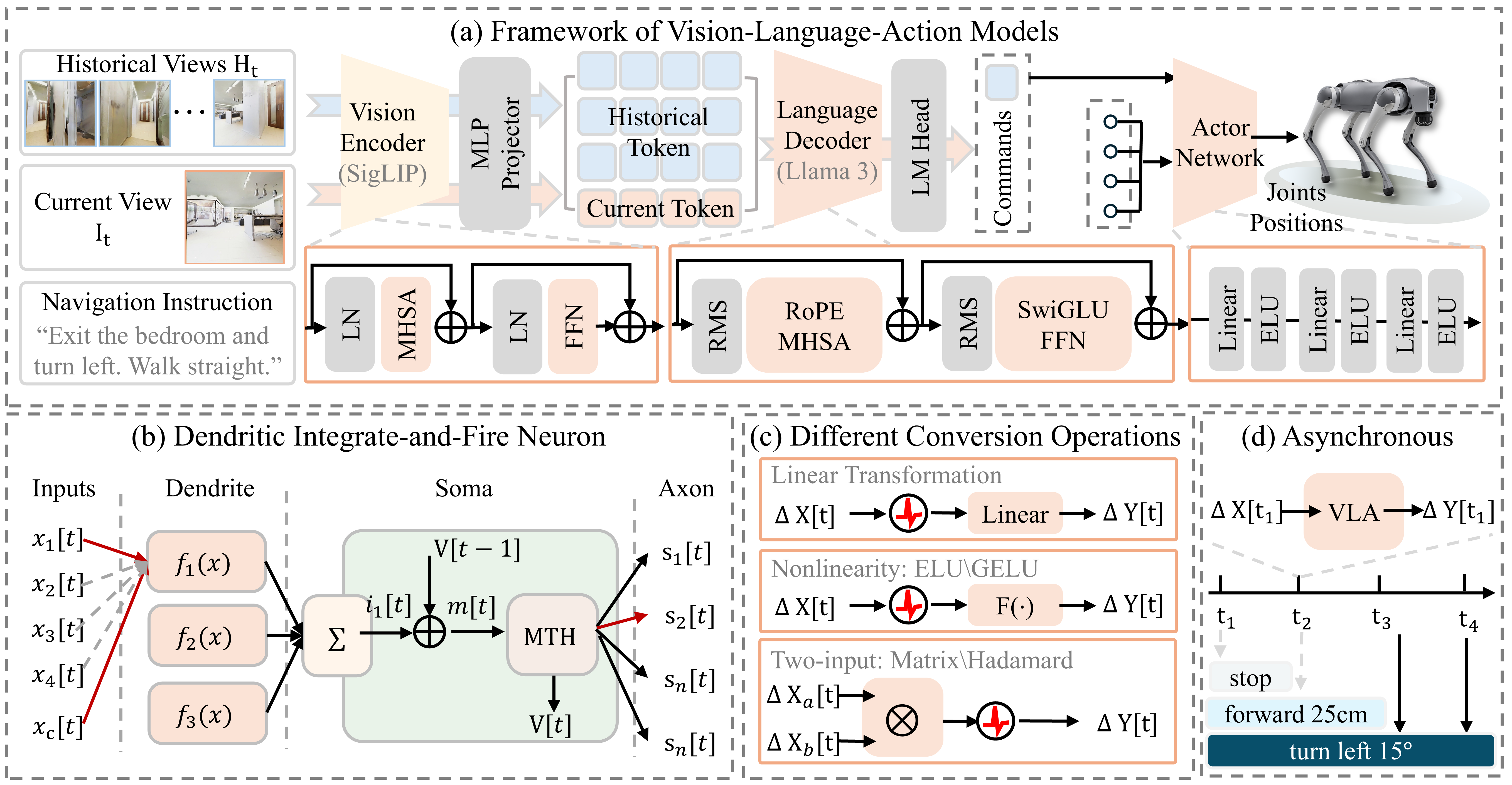}
    \vspace{-10pt}
    \caption{(a) Computation flow of the VLA model. (b) DIF neuron with dendritic feature integration and channel-adaptive spike encoding. (c) Differential conversion strategies for different operators. (d) Event-driven inference by accumulating differential responses over timesteps.}
    \label{fig:methods}
    \vspace{-10pt}
\end{figure}

\section{Methods}
\subsection{Dendritic Integrate-and-Fire Neuron} 
\label{sec:dif} 
To address the representation mismatch induced by activation heterogeneity, we introduce DIF neuron, which couples dendritic channel mixing with channel-adaptive somatic firing. The dendritic compartment redistributes channel responses to reduce inter-channel scale disparity, while the soma adapts its firing range to the residual distribution of each post-dendritic channel.

\subsubsection{Dendritic Channel Mixing}
As analyzed in Sec.~\ref{sec:problem}, VLA activations exhibit substantial channel-wise heterogeneity, with high-magnitude responses concentrated in specific dimensions. We introduce dendritic channel mixing to redistribute these responses before somatic encoding. The dendritic compartment comprises \(D\) parallel branches operating on local channel subspaces with independent temporal states. Given \(\mathbf{x}^{l}[t]\in\mathbb{R}^{C}\), the dendritic dynamics and resulting somatic input are formulated as follows:
\begin{equation}
\mathbf{d}_c^{l}[t+1]
=
\alpha_d^{l}\mathbf{d}_c^{l}[t]
+
(1-\alpha_d^{l})
\mathbf{R}_c^{l}\mathbf{P}_c^{l}\mathbf{x}^{l}[t+1],
\qquad
\mathbf{i}_{s}^{l}[t+1]
=
\sum
(\mathbf{P}_c^{l})^{\top}
\mathbf{d}_d^{l}[t+1].
\label{eq:dendritic_dynamics}
\end{equation}
\(\mathbf{P}_d^{l}\in\{0,1\}^{C_d\times C}\) selects the channel subspace of branch \(d\), \(\mathbf{R}_d^{l}\in\mathbb{R}^{C_d\times C_d}\) performs local channel mixing, and \(\alpha_d^{l}\) controls dendritic state retention. The resulting branch states are mapped back through \((\mathbf{P}_d^{l})^{\top}\) and aggregated at the soma. Owing to the independence of dendritic branches, these transformations can be evaluated in parallel and written in an equivalent layer-wise form as follows:
\begin{equation}
\mathbf{i}_{s}^{l}[t+1]
=
\alpha^{l}\mathbf{i}_{s}^{l}[t]
+
(1-\alpha^{l})
\mathbf{Q}^{l}\mathbf{x}^{l}[t+1],
\qquad
\mathbf{Q}^{l}
=
\sum
(\mathbf{P}_d^{l})^{\top}
\mathbf{R}_d^{l}
\mathbf{P}_d^{l}.
\label{eq:dif_equivalent}
\end{equation}
Here, \(\mathbf{Q}^{l}\) denotes the spatial channel transformation induced by the dendritic branches, while \(\alpha^{l}\) controls temporal integration. When \(\mathbf{Q}^{l}=\mathbf{H}\), the dynamics reduce to Hadamard channel rotation with temporal integration. We further provide theoretical analysis in Appendix~\ref{app:dendritic_analysis}.

\subsubsection{Channel-Adaptive Somatic Firing}
To further improve spike representation after dendritic channel mixing, we introduce channel-adaptive somatic firing.  Specifically, for channel \(c\) in layer \(l\), let \(i_{s,c}^{l}[t]\) denote the aggregated dendritic input at timestep \(t\). The corresponding membrane response and spike output are defined as:
\begin{equation}
m_c^{l}[t]
=
v_c^{l}[t-1]
+
i_{s,c}^{l}[t],
\qquad
z_c^{l}[t]
=
\operatorname{MT}_{\theta_c^{l},n_c^{l}}
\left(
m_c^{l}[t]
\right),
\label{eq:somatic_firing}
\end{equation}
where \(\theta_c^{l}\) controls the channel-specific firing range and \(n\) denotes the fixed number of threshold levels.
To account for the residual distributional variation across post-dendritic channels, we calibrate \((\theta_c^{l}, n_c^{l})\) independently for each channel. Under a fixed timestep budget \(T\), the optimal configuration is obtained by minimizing the channel-wise reconstruction error:
\begin{equation}
\theta_c^{l\star}
=
\arg\min_{\theta>0}
\mathbb{E}_{x\sim\mathcal{D}_{s,c}^{l}}
\left[
\left|
x-\hat{x}_{\theta,n}^{T}
\right|^{2}
\right].
\label{eq:somatic_calibration}
\end{equation}
Here, \(\mathcal{D}_{s,c}^{l}\) denotes the post-dendritic calibration distribution, and \(\hat{x}_{\theta,n}^{T}\) denotes the activation reconstructed from the corresponding \(T\)-step MT response. The calibrated \(\theta_c^{l}\) adapts the encoding range to each post-dendritic channel without increasing the timestep budget. We further visualize the execution dynamics of the proposed DIF neuron. As shown in Fig.~\ref{fig:methods}(b), DIF couples dendritic channel redistribution and temporal integration with channel-adaptive somatic firing.

\subsection{Operation-Aware Differential Conversion with DIF Neurons}
\label{sec:operation}
Building on differential coding~\citep{huangdifferential}, we apply DIF neurons to heterogeneous VLA operators. For the temporal increment \(\Delta\mathbf{Y}^{l}[t]\) of the \(l\)-th operator, DIF performs dendritic channel mixing and temporal integration before somatic firing. Since \(\mathbf{Q}^{l}\) changes the representation basis used for spike encoding, the subsequent operator must account for this transformation. For affine mappings, we exactly absorb \((\mathbf{Q}^{l})^{-1}\) into the input-side weight of the following operator. Accordingly, the DIF encoding and subsequent affine computation are formulated as follows:
\begin{equation}
\mathbf{S}^{l}[t] = \operatorname{DIF}_{\psi^{l}} \!\left( \Delta\mathbf{Y}^{l}[t];\mathbf{Q}^{l} \right), \qquad \widetilde{\mathbf W}^{l+1} = (\mathbf Q^{l})^{-1}\mathbf W^{l+1}, \qquad \Delta\mathbf Z^{l+1}[t] = \mathbf S^{l}[t]\widetilde{\mathbf W}^{l+1}.
\label{eq:dif_reparameterization}
\end{equation}
The reparameterization acts only on the subsequent affine mapping and leaves the DIF dynamics unchanged. For orthogonal \(\mathbf Q^{l}\), the inverse reduces to its transpose and can be fused into the weights offline. Non-affine and multiplicative operators are instead handled by deriving their corresponding differential forms while preserving the original computation. Detail information is as follows.

\subsubsection{Single-Input Operator Conversion}
We first consider operators acting on a single evolving representation, including affine projections and nonlinear transformations. As shown in Fig.~\ref{fig:methods}(c), affine mappings directly propagate the incoming increment through the reparameterized weight, whereas nonlinear operators evaluate the response difference between two consecutive input states. Specifically, it can be defined as follows:
\begin{equation}
\Delta\mathbf{Y}_{\mathcal{F}}[t]
=
\begin{cases}
\Delta\mathbf{X}[t]\widetilde{\mathbf{W}},
&
\mathcal{F}(\mathbf{X})
=
\mathbf{X}\widetilde{\mathbf{W}}+\mathbf{b},
\\
\mathcal{F}\!\left(
\mathbf{X}[t-1]+\Delta\mathbf{X}[t]
\right)
-
\mathcal{F}\!\left(
\mathbf{X}[t-1]
\right),
&
\mathcal{F}\ \text{is non-affine}.
\end{cases}
\label{eq:single_input_conversion}
\end{equation}
For affine mappings, temporal differencing cancels the bias term, reducing the update to a linear mapping of the input increment. For nonlinear operators, including normalization, pointwise activation, and Softmax, the increment is obtained
from consecutive operator responses using the cached state. In both cases, the resulting increment is subsequently processed by DIF.

\subsubsection{Multiplicative Interactions}
We next consider multiplicative interactions between two evolving representations, including attention-score computation, attention-value aggregation, and element-wise gating. As illustrated in Fig.~\ref{fig:methods}(c), for \(\mathcal{F}(\mathbf{A},\mathbf{B})=\mathbf{A}\star\mathbf{B}\), the temporal increment is follows:
\begin{equation}
\Delta\mathbf{Y}_{\mathrm{mul}}[t]
=
\Delta\mathbf{A}[t]\star\mathbf{B}[t-1]
+
\mathbf{A}[t-1]\star\Delta\mathbf{B}[t]
+
\Delta\mathbf{A}[t]\star\Delta\mathbf{B}[t],
\label{eq:multiplicative_conversion}
\end{equation}
where \(\star\) denotes matrix or element-wise multiplication. The first two terms capture the individual operand updates, while the final term accounts for their joint variation. The resulting increment is then encoded by DIF for asynchronous propagation.

\subsection{Overall Architecture}
\label{sec:overall} 
We instantiate SpikingVLA on NaVILA~\citep{cheng2024navila}, a hierarchical vision-language-action policy. As shown in Fig.~\ref{fig:methods}(a), the high-level VLA integrates historical and current visual observations with the navigation instruction through the vision encoder, multimodal projector, and language decoder to produce a spatial navigation command. The low-level actor then combines this command with the current locomotion observations to predict target joint positions.
We apply the proposed DIF-based conversion to both policies while preserving their original computational interfaces. At each control step \(t\), the external observations are fixed over an internal SNN horizon \(\tau\in\{1,\ldots,T\}\); we therefore omit \(t\) from the internal states for clarity. At each internal timestep, the high-level policy accumulates the newly propagated differential response and immediately updates the navigation command, which is subsequently consumed by the low-level policy within the same timestep. The resulting hierarchical execution is formulated as follows:
\begin{equation}
\begin{aligned}
\widehat{\mathbf u}_{h}[\tau]
&=
\widehat{\mathbf u}_{h}[\tau-1]
+
\mathcal C_{\mathrm{DIF}}^{h}
\!\left(\mathbf o_t,\mathbf l\right)[\tau],
\qquad
\mathbf v_{\mathrm{cmd}}[\tau]
=
\mathcal G_{\mathrm{cmd}}
\!\left(\widehat{\mathbf u}_{h}[\tau]\right),
\\
\widehat{\mathbf q}^{d}[\tau]
&=
\widehat{\mathbf q}^{d}[\tau-1]
+
\mathcal C_{\mathrm{DIF}}^{l}
\!\left(
\mathbf o_{l,t},
\mathbf v_{\mathrm{cmd}}[\tau]
\right)[\tau],
\qquad
\tau=1,\ldots,T .
\end{aligned}
\label{eq:overall_architecture}
\end{equation}
\(\mathcal C_{\mathrm{DIF}}^{h}\) and \(\mathcal C_{\mathrm{DIF}}^{l}\) denote full-depth differential execution of the high-level VLA and low-level actor, respectively. At each internal timestep \(\tau\), the high-level state is updated from the current differential response and immediately decoded into \(\mathbf v_{\mathrm{cmd}}[\tau]\), which is consumed by the low-level policy within the same timestep. This produces an updated joint target \(\widehat{\mathbf q}^{d}[\tau]\) at every \(\tau\), with successive differential responses progressively refining the action estimate. As illustrated in Fig.~\ref{fig:methods}(d), executable outputs are therefore available throughout the SNN horizon rather than only at the final timestep.

\section{Experiments}
We evaluate SpikingVLA on three VLN-CE benchmarks: R2R Val-Unseen~\citep{anderson2018vision}, RxR Val-Unseen~\citep{krantz2020beyond}, and VLN-CE-Isaac~\citep{cheng2024navila}. These benchmarks evaluate complementary aspects of embodied navigation, including generalization to unseen environments, long-horizon instruction following, and closed-loop navigation under realistic robot dynamics. We report NE, OS, SR, SPL, and nDTW to evaluate navigation performance. For deployment efficiency, we consider memory footprint, TTFA and Total T~\citep{yao2023spike,yao2025scaling}. 
\begin{figure}[!hbp]
    \centering
    \vspace{-10pt}
    \includegraphics[width=0.98\linewidth]{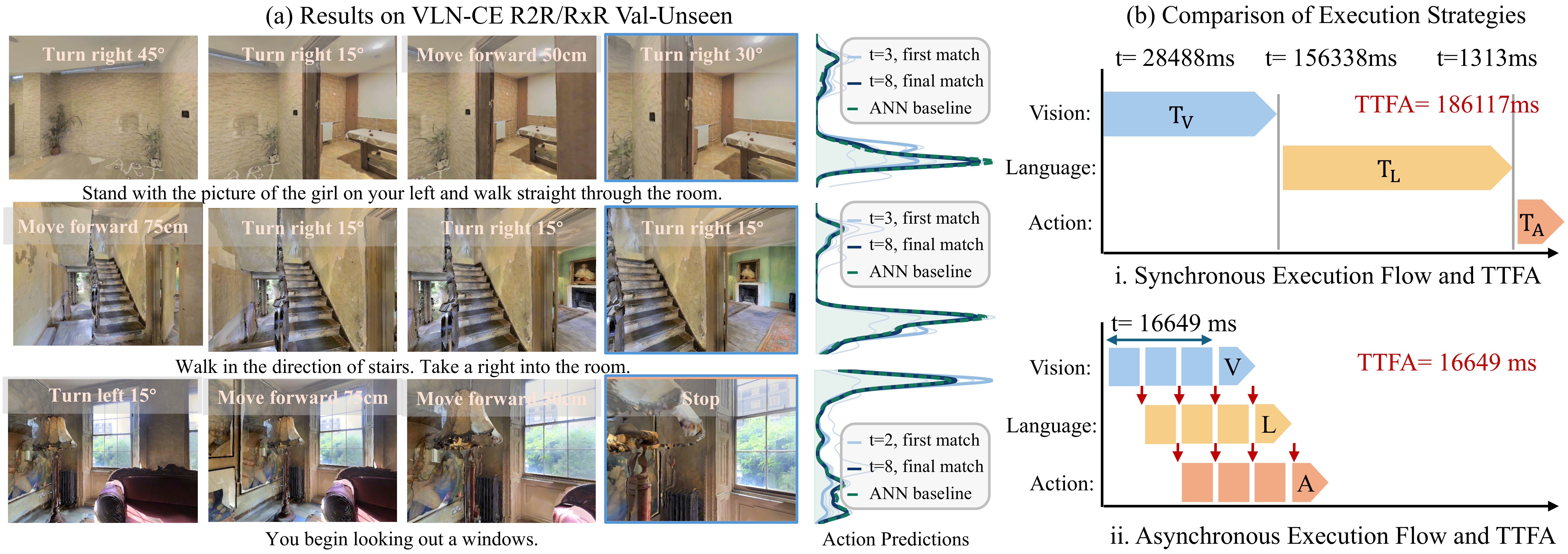}
    \vspace{-10pt}
    \caption{Performance and Efficiency analysis. (a) Navigation trajectories and action-prediction convergence on the VLN-CE R2R and RxR Val-Unseen splits. (b) Inference-latency comparison, highlighting the reduction achieved by asynchronous spike-based execution.}
    \vspace{-15pt}
    \label{fig:exp}
\end{figure}
\subsection{Navigation Performance on VLN-CE Benchmarks}
We evaluate SpikingVLA on the Val-Unseen splits of R2R-CE and RxR-CE against representative VLN and VLA methods~\citep{wei2025streamvln, an2024etpnav}. As shown in Table~\ref{tab:navigation}, SpikingVLA consistently achieves competitive navigation performance across both benchmarks under limited timestep budgets. On R2R-CE Val-Unseen, SpikingVLA achieves an NE of 5.20, OS of 62.4, SR of 54.6, and SPL of 50.3 at T=8. On RxR-CE Val-Unseen, SpikingVLA achieves an NE of 6.30, SR of 51.6, SPL of 45.8, and nDTW of 61.3 at T=8. These results demonstrate that SpikingVLA maintains competitive navigation performance under limited timestep budgets.

Beyond the local simulation budget T, we report Total T to characterize the cumulative end-to-end temporal depth. Under synchronized spiking inference, temporal computation is executed sequentially across layers and VLA components, causing individual temporal horizons to accumulate into an end-to-end temporal depth of approximately 500--600 steps. In contrast, SpikingVLA enables VLA components to process intermediate responses as they become available, allowing temporal execution to overlap and reducing synchronization overhead. To further analyze temporal convergence, Fig.~\ref{fig:exp}(a) presents representative long-horizon trajectories and corresponding action predictions on R2R-CE and RxR-CE under different timestep budgets. As the temporal responses accumulate, spiking action predictions progressively align with the corresponding ANN policy outputs and remain consistent throughout navigation trajectories. These results demonstrate that SpikingVLA preserves ANN-level decision consistency while operating under a compact temporal budget.

\newcolumntype{Y}{>{\centering\arraybackslash}X}
\newcolumntype{C}[1]{>{\centering\arraybackslash}p{#1}}
\newcolumntype{L}[1]{>{\raggedright\arraybackslash}p{#1}}

\begin{table*}[!t]
\centering
\caption{Navigation performance on the R2R-CE Val-Unseen and RxR-CE Val-Unseen. \textbf{TF} indicates training-free paradigms, and \textbf{Total T} denotes the total number of full-network timesteps. \textbf{Bold} values indicate the best results among SNN-based methods.}
\label{tab:navigation}

\begingroup
\small
\renewcommand{\arraystretch}{1.08}
\setlength{\tabcolsep}{2.4pt}

\begin{tabularx}{\linewidth}{
@{}
C{0.55cm}
L{4.15cm}
C{0.48cm}
*{8}{Y}
C{1.05cm}
@{}
}

\toprule
\toprule
\multirow[c]{2}{*}{\textbf{Type}}
&
\multicolumn{1}{c}{
    \multirow[c]{2}{*}{\textbf{Method}}
}
&
\multirow[c]{2}{*}{\textbf{TF}}
&
\multicolumn{4}{c}{\textbf{R2R Val-Unseen}}
&
\multicolumn{4}{c}{\textbf{RxR Val-Unseen}}
&
\multirow[c]{2}{*}{\textbf{Total T}$\downarrow$}
\\

\cmidrule(lr){4-7}
\cmidrule(lr){8-11}

&
&
&
\textbf{NE}$\downarrow$
&
\textbf{OS}$\uparrow$
&
\textbf{SR}$\uparrow$
&
\textbf{SPL}$\uparrow$
&
\textbf{NE}$\downarrow$
&
\textbf{SR}$\uparrow$
&
\textbf{SPL}$\uparrow$
&
\textbf{nDTW}$\uparrow$
&
\\

\midrule

\multirow[c]{5}{*}{
    \rotatebox[origin=c]{90}{\textbf{ANN}}
}
&
StreamVLN~\citep{wei2025streamvln}
&
\ding{55}
&
4.98
&
64.2
&
56.9
&
51.9
&
6.22
&
52.9
&
46.0
&
61.9
&
-
\\

&
InternVLA-N1~\citep{wei2026ground}
&
\ding{55}
&
4.05
&
70.7
&
64.3
&
58.5
&
4.58
&
61.4
&
51.8
&
70.0
&
-
\\

&
ETPNav~\citep{an2024etpnav}
&
\ding{55}
&
4.71
&
65.0
&
57.0
&
49.0
&
5.64
&
54.7
&
44.8
&
61.9
&
-
\\

&
HNR~\citep{wang2024lookahead}
&
\ding{55}
&
4.42
&
67.0
&
61.0
&
51.0
&
5.50
&
56.3
&
46.7
&
63.5
&
-
\\

\cmidrule(lr){2-12}

&
NaVILA~\citep{cheng2024navila}
&
\ding{55}
&
5.28
&
61.5
&
53.9
&
49.3
&
6.12
&
52.3
&
46.1
&
61.0
&
-
\\

\midrule

\multirow[c]{2}{*}{
    \rotatebox[origin=c]{90}{\textbf{SNN}}
}
&
SpikeVLA~\citep{song2026spikevla}
&
\ding{55}
&
5.38
&
\textbf{63.4}
&
53.3
&
47.9
&
\textbf{6.20}
&
\textbf{51.9}
&
45.3
&
60.4
&
\mbox{500-600}
\\

\cmidrule(lr){2-12}

&
\textbf{SpikingVLA (Ours)}
&
\cellcolor{blue!6}\ding{51}
&
\cellcolor{blue!6}\textbf{5.20}
&
\cellcolor{blue!6}62.4
&
\cellcolor{blue!6}\textbf{54.6}
&
\cellcolor{blue!6}\textbf{50.3}
&
\cellcolor{blue!6}6.30
&
\cellcolor{blue!6}51.6
&
\cellcolor{blue!6}\textbf{45.8}
&
\cellcolor{blue!6}\textbf{61.3}
&
\cellcolor{blue!6}\textbf{8}
\\
\bottomrule
\bottomrule

\end{tabularx}

\endgroup
\vspace{-10pt}
\end{table*}

\newcolumntype{Y}{>{\centering\arraybackslash}X}
\newcolumntype{C}[1]{>{\centering\arraybackslash}p{#1}}

\begin{table*}[!t]
\centering
\caption{
Closed-loop navigation performance and deployment efficiency on VLN-CE-Isaac.}
\label{tab:isaac_efficiency}
\begingroup
\small
\renewcommand{\arraystretch}{1.06}
\setlength{\tabcolsep}{3.0pt}
\begin{tabularx}{\linewidth}{
@{}
c
>{\raggedright\arraybackslash}X
C{0.45cm}
*{4}{C{0.8cm}}
C{1.0cm}
C{1.0cm}
C{1.0cm}
@{}
}

\toprule
\toprule
\multirow[c]{2}{*}{\textbf{Type}}
&
\multicolumn{1}{c}{
    \multirow[c]{2}{*}{\textbf{Method}}
}
&
\multirow[c]{2}{*}{\textbf{TF}}
&
\multicolumn{4}{c}{\textbf{Closed-Loop Navigation}}
&
\multicolumn{3}{c}{\textbf{Deployment Efficiency}}
\\

\cmidrule(lr){4-7}
\cmidrule(lr){8-10}

&
&
&
\textbf{NE}$\downarrow$
&
\textbf{OS}$\uparrow$
&
\textbf{SR}$\uparrow$
&
\textbf{SPL}$\uparrow$
&
\textbf{Mem.}$\downarrow$
&
\textbf{TTFA}$\downarrow$
& 
\mbox{\textbf{Total T}$\downarrow$}
\\

\midrule

\multirow[c]{4}{*}{
    \rotatebox[origin=c]{90}{\textbf{ANN}}
}

& NaVILA-Blind~\citep{cheng2024navila}
& \ding{55}
& 6.03
& 49.0
& 36.2
& 33.3
& 16126.2
& -
& -

\\

& NaVILA-Vision~\citep{cheng2024navila}
& \ding{55}
& 5.49
& 58.7
& 50.2
& 45.5
& 16126.2
& -
& -

\\

& NaVILA-R~\citep{song2026spikevla}
& \ding{55}
& 6.29
& 52.1
& 36.5
& 29.5
& 16126.2
& -
& -

\\

& NaVILA-AWQ~\citep{cheng2024navila}
& \ding{55}
& 6.58
& 48.2
& 32.8
& 27.4
& 5846.9
& -
& -

\\

\midrule

\multirow[c]{3}{*}{
    \rotatebox[origin=c]{90}{\textbf{SNN}}
}

& SpikeVLA~\citep{song2026spikevla}
& \ding{55}
& 6.02
& 53.6
& 32.7
& 28.5
& \textbf{6251.5}
& 186.1
& \mbox{500-600}

\\

\cmidrule(lr){2-10}

& \textbf{SpikingVLA-Blind (Ours)}
& \cellcolor{blue!6}\ding{51}
& \cellcolor{blue!6}6.11
& \cellcolor{blue!6}48.1
& \cellcolor{blue!6}34.4
& \cellcolor{blue!6}31.1
& \cellcolor{blue!6}\textbf{6251.5}
& \cellcolor{blue!6}\textbf{16.7}
& \cellcolor{blue!6}\textbf{8}
\\

& \textbf{SpikingVLA-Vision (Ours)}
& \cellcolor{blue!6}\ding{51}
& \cellcolor{blue!6}\textbf{5.45}
& \cellcolor{blue!6}\textbf{56.6}
& \cellcolor{blue!6}\textbf{44.6}
& \cellcolor{blue!6}\textbf{41.1}
& \cellcolor{blue!6}\textbf{6251.5}
& \cellcolor{blue!6}\textbf{16.7}
& \cellcolor{blue!6}\textbf{8}
\\

\bottomrule
\bottomrule

\end{tabularx}

\endgroup
\vspace{-10pt}
\end{table*}

\subsection{Closed-loop Embodied Navigation}
We further evaluate SpikingVLA on VLN-CE-Isaac using the Unitree Go2 embodiment. As shown in Table~\ref{tab:isaac_efficiency}, both SpikingVLA-Blind and SpikingVLA-Vision preserve strong navigation performanc. At $T=8$, SpikingVLA-Blind achieves an NE/OS/SR/SPL of $6.11/48.1/34.4/31.1$, while SpikingVLA-Vision further improves the results to $5.45/56.6/44.6/41.1$ by leveraging visual observations. Compared with its ANN counterpart, NaVILA-Vision, SpikingVLA-Vision maintains comparable navigation performance, with only 5.6 and 4.4 percentage-point gaps in SR and SPL, respectively. Moreover, compared with an existing spiking VLA baseline, SpikingVLA-Vision improves SR and SPL by 11.9 and 12.6 percentage points, demonstrating the effectiveness of the proposed conversion framework for closed-loop embodied navigation.

SpikingVLA further improves deployment efficiency through asynchronous execution. As illustrated in Fig.~\ref{fig:exp}(b), synchronized VLA inference serializes vision, language, and action processing, forcing each downstream component to wait for upstream temporal computation to complete. These synchronization stalls accumulate along the VLA pipeline and substantially increase first-action latency. In contrast, SpikingVLA propagates intermediate spike responses as they become available, allowing downstream components to start computation earlier and enabling temporal overlap across vision, language, and action processing. As a result, TTFA decreases from 186.1 to 16.7, corresponding to an approximately $11.1\times$ speedup in first-action generation. These results demonstrate that asynchronous spike propagation effectively enables low-latency closed-loop VLA inference.

\newpage
\begin{wrapfigure}{r}{0.50\linewidth}
    \centering
    \vspace{-5pt}
    \includegraphics[width=0.98\linewidth]{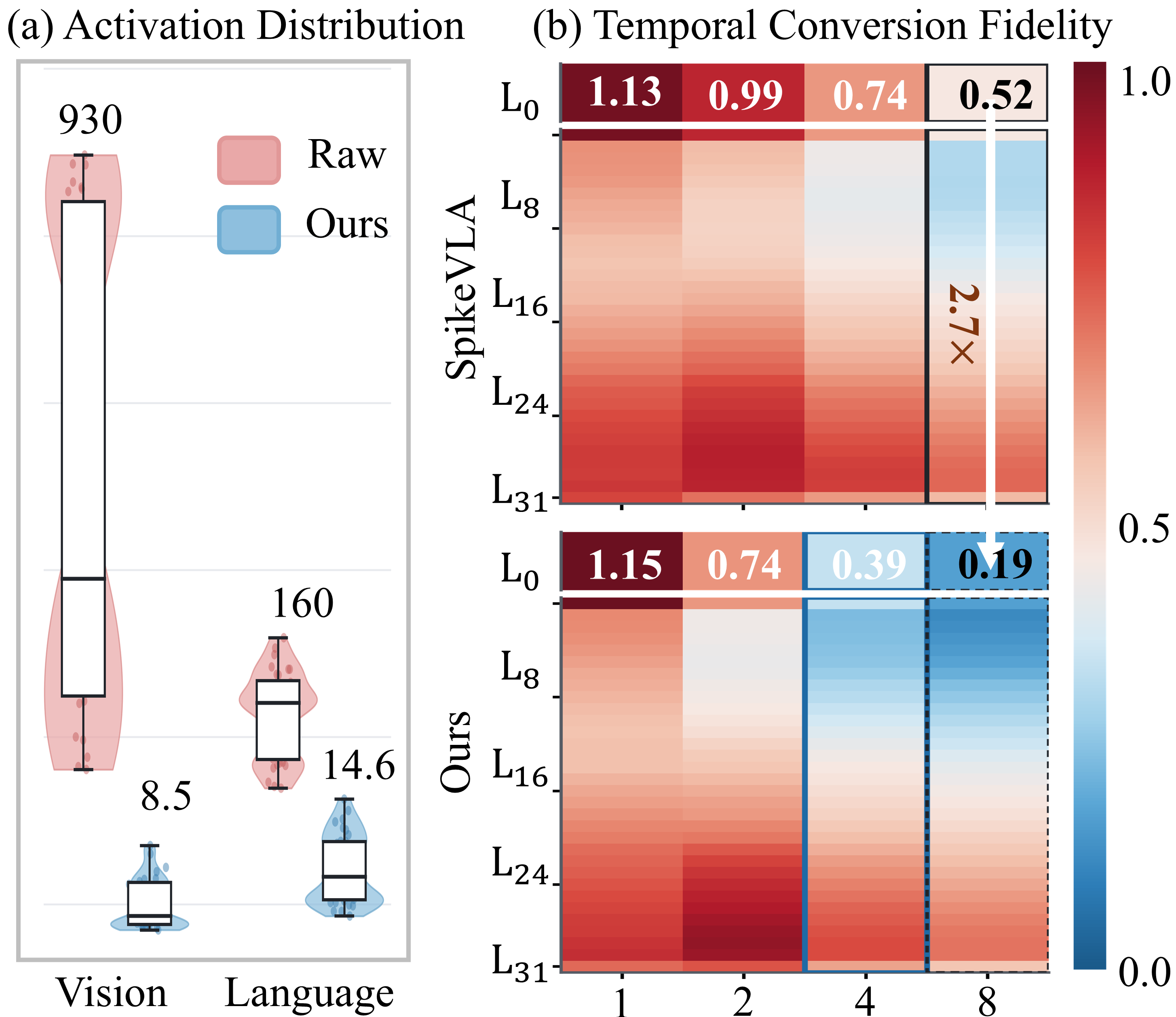}
    \caption{(a) Channel-wise activation distributions. (b) Layer-wise conversion discrepancy under different simulation timesteps. 
    }
    \label{fig:dif_distribution}
    \vspace{-5pt}
\end{wrapfigure}

\subsection{Quantization Analysis}
To investigate whether DIF alleviates activation imbalance in VLA models, we visualize channel-wise activations before and after dendritic mixing in Fig.~\ref{fig:dif_distribution}(a). DIF produces more balanced somatic inputs in both the vision and LLaMA blocks by reducing the dominance of extreme channels. For representative channels, the activation ranges decrease from 15.1 to 7.1 and from 30.2 to 7.0, respectively. These results indicate that dendritic mixing effectively balances channel responses and stabilizes activation distributions for spike encoding.

We further evaluate conversion fidelity by measuring ANN-SNN representation discrepancy in the vision encoder as a representative VLA component under different timestep budgets. As shown in Fig.~\ref{fig:dif_distribution}(b), DIF consistently achieves lower conversion errors than the SpikeVLA baseline under the same conversion settings~\citep{song2026spikevla}. At $T=8$, DIF reduces the representation discrepancy from 0.52 to 0.19. This improvement demonstrates that DIF preserves ANN representations more effectively under limited temporal budgets, reducing information loss caused by insufficient timesteps.


\begin{wraptable}{r}{0.52\linewidth}
\vspace{-15pt}
\centering
\caption{
Module-wise ANN--SNN conversion error under different spiking neuron configurations.
}
\label{tab:ablation}

\begingroup
\small
\renewcommand{\arraystretch}{1.05}
\setlength{\tabcolsep}{3.8pt}

\begin{tabular}{lccccc}
\toprule
\multirow{2}{*}{\textbf{Module}}
&
\multirow{2}{*}{\textbf{Neuron}}
&
\multicolumn{4}{c}{\textbf{Number of Thresholds}}
\\
\cmidrule(lr){3-6}

&
&
$\mathbf{n=1}$
&
$\mathbf{n=2}$
&
$\mathbf{n=4}$
&
$\mathbf{n=8}$
\\

\midrule

\multirow{2}{*}{\makecell[c]{Vision\\Encoder}}
& MT
& 1.13
& 0.99
& 0.74
& 0.52
\\
& DIF
& 1.15
& 0.74
& 0.39
& 0.11
\\

\midrule

\multirow{2}{*}{\makecell[c]{LLaMA\\Decoder}}
& MT
& 0.48
& 0.40
& 0.31
& 0.24
\\
& DIF
& 0.52
& 0.36
& 0.22
& 0.11
\\

\bottomrule
\end{tabular}

\endgroup
\vspace{-10pt}
\end{wraptable}

\subsection{Ablation Study}
We further evaluate module-wise conversion fidelity under different spiking neuron configurations~\citep{huang2024towardsecmt}. As shown in Table~\ref{tab:ablation}, increasing the number of thresholds consistently reduces the ANN--SNN representation error in both the vision encoder and LLaMA decoder. Notably, DIF achieves progressively larger improvements over MT as the firing resolution increases. In the vision encoder, DIF reduces the conversion error from \(0.99\) to \(0.74\), \(0.74\) to \(0.39\), and \(0.52\) to \(0.11\) for \(n=2,4,8\), corresponding to relative reductions of \(25.3\%\), \(47.3\%\), and \(78.8\%\), respectively. Similar trends are observed in the LLaMA decoder, where the error decreases from \(0.40\) to \(0.36\), \(0.31\) to \(0.22\), and \(0.24\) to \(0.11\), yielding reductions of \(10.0\%\), \(29.0\%\), and \(54.2\%\). These consistent gains across heterogeneous visual and language representations validate the effectiveness of dendritic channel redistribution for improving ANN--SNN conversion fidelity.

\section{Conclusion}
We present SpikingVLA, an ANN-to-SNN conversion framework that enables accurate and low-latency spiking inference for pretrained vision-language-action models. SpikingVLA addresses two fundamental challenges in spiking VLA conversion: channel-wise activation outliers that limit conversion fidelity under constrained timesteps and temporal synchronization bottlenecks that hinder efficient inference. To overcome these challenges, SpikingVLA introduces a Dendritic Integrate-and-Fire (DIF) neuron to improve spike representation fidelity and an asynchronous execution mechanism to enable temporal overlap across VLA components. Extensive experiments demonstrate that SpikingVLA preserves competitive navigation performance while substantially reducing temporal execution overhead. Further analyses verify that DIF improves ANN--SNN conversion fidelity across both visual and language components. These results highlight SpikingVLA as an effective framework for scalable and efficient event-driven VLA inference.

\section{Use of Large Language Model}
In this manuscript, we used a LLM solely for language refinement and manuscript polishing. Specifically, the LLM was used to improve sentence clarity, correct grammatical errors, and enhance the overall readability and academic style of the manuscript. The LLM was not involved in the development of research ideas, methodology design, experimental implementation, result analysis, or generation of scientific claims. All technical contributions, experimental results, and conclusions presented in this paper were independently developed and verified by the authors.

\bibliography{iclr2027_conference}
\bibliographystyle{iclr2027_conference}
\newpage

\appendix

\section{Navigation Benchmarks and Evaluation Metrics}
\label{app:benchmarks}
We evaluate SpikingVLA across three complementary navigation benchmarks covering continuous visual-language navigation and physics-based robotic execution. Accordingly, we report standard navigation metrics to measure task performance and deployment metrics to characterize latency, memory, and energy efficiency. Detail information is as follows.

\subsection{Navigation Benchmarks}
\textbf{R2R-CE: }R2R-CE extends the Room-to-Room (R2R) benchmark~\citep{anderson2018vision} from discrete navigation graphs to continuous Matterport3D environments in Habitat~\citep{chang2017matterport3d}. Each episode specifies a natural-language instruction, an initial pose, a goal location, and a continuous reference trajectory. We evaluate on the Val-Unseen split, which contains 1,839 episodes across 11 environments disjoint from the training scenes. R2R-CE therefore primarily evaluates generalization to unseen indoor environments under continuous navigation.

\textbf{RxR-CE: }RxR-CE extends Room-Across-Room (RxR)~\citep{ku2020room} to the continuous VLN-CE setting. Compared with R2R-CE, RxR-CE contains longer and more detailed navigation instructions and trajectories, placing greater emphasis on long-horizon instruction following and trajectory fidelity~\citep{krantz2020beyond}. RxR additionally provides multilingual annotations in English, Hindi, and Telugu, together with guide and follower trajectories. Following the standard English VLN-CE setting, we evaluate the Val-Unseen guide split using the English annotations.

\textbf{VLN-CE-Isaac: }VLN-CE-Isaac~\citep{cheng2024navila} extends continuous VLN evaluation from high-level navigation to physics-based robotic execution. The benchmark transfers R2R Val-Unseen environments to Isaac Sim and selects 1,077 physically traversable trajectories from the original 1,839 episodes using high-quality scene meshes. Unlike Habitat-based R2R-CE and RxR-CE, VLN-CE-Isaac explicitly models robot embodiment, joint-level motion, collision, and low-level locomotion dynamics. We use the Unitree Go2 embodiment and evaluate the complete hierarchical pipeline from language-conditioned navigation to joint-level locomotion control.

\subsection{Evaluation Metrics}
We evaluate navigation performance using standard VLN metrics~\citep{cheng2024navila}, including Navigation Error (NE), Oracle Success (OS), Success Rate (SR), Success weighted by Path Length (SPL), and Normalized Dynamic Time Warping (nDTW). NE evaluates the geodesic distance between the final agent position and the goal, where lower values indicate better endpoint accuracy. OS and SR quantify goal-reaching success, with OS considering whether the agent enters the goal region at any point during navigation and SR measuring whether the final position reaches the goal region under the standard 3\,m radius defined by the VLN-CE protocol. SPL further accounts for path efficiency by penalizing successful trajectories that deviate from the shortest path. nDTW evaluates trajectory-level alignment between predicted and reference trajectories.

\textbf{Deployment Metrics.}
For VLN-CE-Isaac, we report time-to-first-action (TTFA), peak memory footprint, and estimated energy consumption to characterize deployment efficiency. TTFA measures the wall-clock latency from receiving an observation to producing the first executable action: 
\begin{equation}
\mathrm{TTFA}
=
t_{\mathrm{first\ action}}
-
t_{\mathrm{input}}.
\label{eq:ttfa_metric}
\end{equation}
Memory footprint records the maximum runtime memory usage during inference, while estimated energy consumption measures the computational energy required for action generation. All deployment metrics are evaluated under the same execution settings for a controlled comparison.

\begin{table}[t]
\centering
\caption{
Module-wise theoretical energy consumption breakdown of VLA inference.
Energy is estimated using the MAC/AC-based model under a 45-nm CMOS technology.
}
\label{tab:energy_breakdown}

\small
\setlength{\tabcolsep}{5pt}

\begin{tabular}{lccccc}
\toprule
\textbf{Method}
&
\textbf{Vision Encoder}
&
\textbf{Projector}
&
\textbf{Language Decoder}
&
\textbf{Actor Network}
&
\textbf{Total}
\\
\midrule

NaVILA-Vision (ANN)
&
24.53
&
0.51
&
120.33
&
2.90
&
148.27
\\

SpikingVLA ($T=4$)
&
3.87
&
0.31
&
19.05
&
0.45
&
23.68
\\

SpikingVLA ($T=8$)
&
7.70
&
0.35
&
37.84
&
0.91
&
46.80
\\

\bottomrule
\end{tabular}
\end{table}

\subsection{Theoretical Energy Consumption Analysis}
We estimate the theoretical inference energy of SpikingVLA using a standard operation-based energy model for SNNs~\citep{yao2025scaling, zhang2026neural, wang2026bipolar}. The total energy is estimated from dense multiply--accumulate (MAC) and spike-driven accumulate (AC) operations:
\begin{equation}
E^{(i)}
=
E_{\mathrm{MAC}}N_{\mathrm{MAC}}^{(i)}
+
E_{\mathrm{AC}}
\sum_{l\in\mathcal{L}_{\mathrm{spk}}}
N_l^{\mathrm{syn}}
\sum_{\tau=1}^{T_i^{\mathrm{eff}}}
r_l^{(i)}[\tau],
\label{eq:energy}
\end{equation}
where $N_{\mathrm{MAC}}^{(i)}$ denotes the number of dense MAC operations for input $i$, $N_l^{\mathrm{syn}}$ represents the synaptic operation count of spiking layer $l$, $r_l^{(i)}[\tau]$ denotes the average firing activity at timestep $\tau$, and $T_i^{\mathrm{eff}}$ represents the effective number of executed timesteps. Following previous studies, we adopt a 45-nm CMOS energy model with $E_{\mathrm{MAC}}=4.6$ pJ and $E_{\mathrm{AC}}=0.9$ pJ~\citep{horowitz20141}. The additional computation introduced by dendritic mixing is accounted for in the AC operations.

Table~\ref{tab:energy_breakdown} summarizes the module-wise theoretical energy consumption of ANN and SpikingVLA inference. Compared with dense ANN execution~\citep{cheng2024navila}, SpikingVLA substantially reduces energy consumption across all VLA components through sparse spike-driven computation. The LLaMA-3 language decoder dominates the overall energy cost due to its large computational scale, while SpikingVLA reduces its energy consumption from 120.33 J to 19.05 J and 37.84 J at $T=4$ and $T=8$, respectively. Similar energy reductions are observed in the vision encoder and actor network, demonstrating the effectiveness of spike-based execution across the VLA pipeline. Increasing the timestep budget from $T=4$ to $T=8$ improves navigation performance with additional energy consumption, while maintaining substantially lower energy than dense ANN inference.

\section{Detail of DIF Neurons}
\label{app:dendritic_analysis}
We analyze the proposed Dendritic Integrate-and-Fire (DIF) neuron from three aspects: theoretical properties, calibration behavior, and activation redistribution. Theoretical analysis characterizes the role of dendritic mixing and adaptive somatic firing, calibration experiments evaluate conversion fidelity under different settings, and activation visualization illustrates the mitigation of channel-wise activation imbalance. Detailed results are provided below.

\subsection{Theoretical Properties}
We analyze the effect of dendritic mixing on channel-wise activation concentration in VLA representations. Using the formulation in
Sec.~\ref{sec:dif}, the effective dendritic transformation is follows:
\begin{equation}
\mathbf Q
=
\sum_{d=1}^{D}
\mathbf P_d^{\top}\mathbf R_d\mathbf P_d,
\qquad
\mathbf x_d
=
\mathbf P_d\mathbf x,
\qquad
\mu_d
=
\sqrt{C_d}\,
\|\mathbf R_d\|_{\max}.
\label{eq:app_dendritic_transform}
\end{equation}
Here, $\mathbf x_d$ denotes the representation assigned to branch $d$, the selectors $\{\mathbf P_d\}_{d=1}^{D}$ form a disjoint partition of the channel space, and $\mathbf R_d^{\top}\mathbf R_d=\mathbf I_{C_d}$. The quantity $\mu_d$ characterizes the coherence of the local mixing matrix, with $\|\mathbf R_d\|_{\max} = \max_{i,j}|[\mathbf R_d]_{ij}|$. A lower coherence corresponds to a more diffuse redistribution across the branch channels.

\begin{theorem}[Energy-Preserving Outlier Redistribution]
\label{thm:outlier_redistribution}
Let $\mathbf o_d\in\mathbb R^{C_d}$ denote a nonzero outlier component within branch $d$, supported on at most $k_d$ channels.
Under the disjoint branch partition and orthogonal local transformations, dendritic mixing satisfies
\begin{equation}
\|\mathbf Q\mathbf x\|_2
=
\|\mathbf x\|_2,
\qquad
\|\mathbf R_d\mathbf o_d\|_\infty
\le
\mu_d
\sqrt{\frac{k_d}{C_d}}
\|\mathbf o_d\|_2 .
\label{eq:app_outlier_redistribution}
\end{equation}
The first result establishes norm preservation, whereas the second bounds the peak concentration of a channel-sparse outlier component. Moreover, if $\mu_d k_d<\sqrt{C_d}$, then $\|\mathbf R_d\mathbf o_d\|_\infty < \|\mathbf o_d\|_\infty$. Thus, sufficiently sparse outliers are reduced in peak magnitude through low-coherence redistribution without suppressing their $\ell_2$ energy.
\end{theorem}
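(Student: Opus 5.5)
Three parts: norm preservation, the peak bound, and the strict-reduction corollary. Each uses only the disjoint partition $\{\mathbf P_d\}$, the orthogonality $\mathbf R_d^{\top}\mathbf R_d=\mathbf I_{C_d}$, and the definition $\mu_d=\sqrt{C_d}\,\|\mathbf R_d\|_{\max}$ from Eq.~\eqref{eq:app_dendritic_transform}.

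\emph{Norm preservation.} Because the selectors form a disjoint partition of the channels, we have $\mathbf P_d\mathbf P_{d'}^{\top}=\delta_{dd'}\mathbf I_{C_d}$ and $\sum_d\mathbf P_d^{\top}\mathbf P_d=\mathbf I_C$. Using these,
\[
\mathbf Q^{\top}\mathbf Q=\sum_{d,d'}\mathbf P_d^{\top}\mathbf R_d^{\top}\mathbf P_d\mathbf P_{d'}^{\top}\mathbf R_{d'}\mathbf P_{d'}=\sum_d\mathbf P_d^{\top}\mathbf R_d^{\top}\mathbf R_d\mathbf P_d=\sum_d\mathbf P_d^{\top}\mathbf P_d=\mathbf I_C .
\]
So $\mathbf Q$ is orthogonal and $\|\mathbf Q\mathbf x\|_2=\|\mathbf x\|_2$. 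The same computation shows $\mathbf Q^{-1}=\mathbf Q^{\top}$, which justifies the weight absorption in Eq.~\eqref{eq:dif_reparameterization}.

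\emph{Peak bound.} Fix an output index $i$ and let $S$ be the support of $\mathbf o_d$, with $|S|\le k_d$. By Cauchy--Schwarz restricted to $S$,
\[
|[\mathbf R_d\mathbf o_d]_i|=\Bigl|\sum_{j\in S}[\mathbf R_d]_{ij}[\mathbf o_d]_j\Bigr|\le \Bigl(\sum_{j\in S}[\mathbf R_d]_{ij}^2\Bigr)^{1/2}\|\mathbf o_d\|_2\le \sqrt{k_d}\,\|\mathbf R_d\|_{\max}\|\mathbf o_d\|_2 .
\]
Substituting $\|\mathbf R_d\|_{\max}=\mu_d/\sqrt{C_d}$ and maximizing over $i$ gives exactly the second inequality in Eq.~\eqref{eq:app_outlier_redistribution}.

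\emph{Strict reduction.} On a support of size at most $k_d$ we have $\|\mathbf o_d\|_2\le\sqrt{k_d}\,\|\mathbf o_d\|_\infty$. Combining this with the peak bound gives
\[
\|\mathbf R_d\mathbf o_d\|_\infty\le \frac{\mu_d k_d}{\sqrt{C_d}}\,\|\mathbf o_d\|_\infty .
\]
Since $\mathbf o_d\ne 0$, we have $\|\mathbf o_d\|_\infty>0$. Hence the hypothesis $\mu_d k_d<\sqrt{C_d}$ makes the factor strictly less than one, so the inequality is strict. Energy is not suppressed because $\|\mathbf R_d\mathbf o_d\|_2=\|\mathbf o_d\|_2$ by orthogonality of $\mathbf R_d$.

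\emph{Main obstacle.} The only delicate point is the first part: the paper's formula for $\mathbf Q$ has to be read with a disjoint and exhaustive partition. If some channels were uncovered, $\mathbf Q$ would fail to be an isometry. So I would state this assumption explicitly, or add the identity on any uncovered channels, before carrying out the computation above.
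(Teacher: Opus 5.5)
Your proof is correct and follows the paper's argument step for step: the same cross-term cancellation giving $\mathbf Q^{\top}\mathbf Q=\mathbf I_C$, the same peak bound from $\|\mathbf R_d\|_{\max}=\mu_d/\sqrt{C_d}$ plus Cauchy--Schwarz on the support of $\mathbf o_d$ (you apply it row-wise, the paper via $\|\mathbf o_d\|_1\le\sqrt{k_d}\|\mathbf o_d\|_2$, which is the same estimate), and the same $\|\mathbf o_d\|_2\le\sqrt{k_d}\|\mathbf o_d\|_\infty$ step for strict reduction. Your explicit use of $\mathbf o_d\neq\mathbf 0$ to make the final inequality strict is a detail the paper leaves implicit, and your exhaustiveness concern is already covered by the paper's assumption that the selectors form a partition of the channel space.
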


\begin{proof}
Since the branch selectors form a disjoint partition, all cross-branch terms vanish. Together with the orthogonality of each local
transformation, this gives
\begin{equation}
\mathbf Q^{\top}\mathbf Q
=
\sum_{d=1}^{D}
\mathbf P_d^{\top}
\mathbf R_d^{\top}\mathbf R_d
\mathbf P_d
=
\sum_{d=1}^{D}
\mathbf P_d^{\top}\mathbf P_d
=
\mathbf I_C .
\label{eq:app_q_orthogonality}
\end{equation}
Hence, $\mathbf Q$ is orthogonal and $\|\mathbf Q\mathbf x\|_2=\|\mathbf x\|_2$. For the outlier component, its $k_d$-sparse support gives $\|\mathbf o_d\|_1 \le \sqrt{k_d}\|\mathbf o_d\|_2$. Using the definition of $\mu_d$, we obtain
\begin{equation}
\|\mathbf R_d\mathbf o_d\|_\infty
\le
\|\mathbf R_d\|_{\max}\|\mathbf o_d\|_1
\le
\frac{\mu_d}{\sqrt{C_d}}
\sqrt{k_d}\|\mathbf o_d\|_2
=
\mu_d
\sqrt{\frac{k_d}{C_d}}
\|\mathbf o_d\|_2 .
\label{eq:app_outlier_bound_proof}
\end{equation}
Furthermore, $\|\mathbf o_d\|_2 \le \sqrt{k_d}\|\mathbf o_d\|_\infty$, and therefore $\|\mathbf R_d\mathbf o_d\|_\infty \le (\mu_d k_d/\sqrt{C_d}) \|\mathbf o_d\|_\infty$. The sufficient condition $\mu_d k_d<\sqrt{C_d}$ then yields strict peak reduction.
\end{proof}

Theorem~\ref{thm:outlier_redistribution} shows that dendritic mixing reduces channel-wise concentration through redistribution rather than clipping or direct attenuation. Orthogonality preserves the representation norm, while low coherence limits the extent to which a sparse outlier response remains concentrated in individual channels. For a normalized Hadamard transformation, $\mu_d=1$; in the single-channel case ($k_d=1$), the peak magnitude is reduced by a factor of $1/\sqrt{C_d}$ while the $\ell_2$ norm remains unchanged.

The preceding result characterizes the spatial redistribution induced by the dendritic branches. The somatic firing range is calibrated from the post-mixing activation statistics. Therefore, reduced peak concentration can yield a smaller effective firing range. We denote this calibrated reduction by $\theta_{\mathrm{DIF}} = \beta\theta_{\mathrm{base}}$, where $0<\beta<1$, and analyze its implication for temporal spike encoding below.

\begin{theorem}[Temporal--Threshold Granularity Trade-off]
\label{thm:temporal_threshold_tradeoff}
Consider a single-emission MT neuron with firing amplitudes $\{0,\pm\theta/2^{k-1}\}_{k=1}^{n}$ over $T$ timesteps. Under uniform temporal accumulation, the decoded response lies on a nominal lattice with spacing $\theta/(T2^{n-1})$. Suppose calibration after dendritic redistribution yields $\theta_{\mathrm{DIF}} = \beta\theta_{\mathrm{base}}$, with $0<\beta<1$. Then the DIF configuration attains a nominal granularity no coarser than that of the baseline whenever
\begin{equation}
\frac{T_{\mathrm{DIF}}}{T_{\mathrm{base}}}
\ge
\beta\,
2^{\,n_{\mathrm{base}}-n_{\mathrm{DIF}}}.
\label{eq:app_temporal_threshold_tradeoff}
\end{equation}
For a fixed firing-level budget, a smaller calibrated range permits the same nominal granularity to be attained with a shorter temporal horizon. Conversely, increasing the number of firing levels can compensate for a more constrained temporal budget.
\end{theorem}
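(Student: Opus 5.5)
The plan is to first make the nominal lattice explicit, and then reduce the granularity comparison to a single inequality between lattice spacings.

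\emph{Lattice.} With uniform temporal accumulation, the decoded response after $T$ steps is $\hat x=\frac{1}{T}\sum_{\tau=1}^{T} z[\tau]$, where each $z[\tau]$ lies in $\{0,\pm\theta/2^{k-1}\}_{k=1}^{n}$. Every such firing level is an integer multiple of the finest level $\theta/2^{n-1}$: the level $\theta/2^{k-1}$ equals $2^{n-k}$ times it. Each per-step emission therefore lies in $(\theta/2^{n-1})\mathbb Z$, and so does the sum over $T$ steps. Dividing by $T$ places $\hat x$ in $\frac{\theta}{T2^{n-1}}\mathbb Z$. This proves the claimed nominal spacing
\[
\delta(\theta,T,n)=\frac{\theta}{T\,2^{\,n-1}}.
\]
Here ``nominal'' means the lattice that contains all reachable decoded values; I will not attempt to show every lattice point within range is attained. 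The statement only concerns nominal granularity, so that step is unnecessary.

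\emph{Comparison.} The DIF configuration is no coarser than the baseline precisely when $\delta_{\mathrm{DIF}}\le\delta_{\mathrm{base}}$. Substituting $\theta_{\mathrm{DIF}}=\beta\theta_{\mathrm{base}}$, this reads
\[
\frac{\beta\theta_{\mathrm{base}}}{T_{\mathrm{DIF}}2^{\,n_{\mathrm{DIF}}-1}}\le\frac{\theta_{\mathrm{base}}}{T_{\mathrm{base}}2^{\,n_{\mathrm{base}}-1}}.
\]
Because $\theta_{\mathrm{base}}>0$ and $T,2^{n-1}>0$, cross-multiplying preserves the direction of the inequality. It rearranges to exactly the stated condition $T_{\mathrm{DIF}}/T_{\mathrm{base}}\ge\beta\,2^{\,n_{\mathrm{base}}-n_{\mathrm{DIF}}}$, so the condition is in fact both necessary and sufficient.

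\emph{Corollaries.} Setting $n_{\mathrm{DIF}}=n_{\mathrm{base}}$ reduces the condition to $T_{\mathrm{DIF}}\ge\beta T_{\mathrm{base}}$. Since $\beta<1$, this permits $T_{\mathrm{DIF}}<T_{\mathrm{base}}$, giving the first remark. For the second remark, each additional firing level halves the right-hand side, which relaxes the required temporal ratio.

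\emph{Main obstacle.} The only delicate point is interpreting ``single-emission'' and ``uniform temporal accumulation'' so that the decoded value really is the time average of the emitted levels. Under the differential decoding $r[t]=r[t-1]+x[t]/t$ of Eq.~(\ref{eq:differential-coding}), the per-step weights differ, and the spacing would then be a different multiple. I will therefore state explicitly that the theorem assumes plain rate decoding $\frac1T\sum_\tau z[\tau]$ with one emission per step. Under that assumption the lattice claim follows from the integrality argument above.
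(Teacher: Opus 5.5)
Your proposal is correct and follows the paper's proof essentially step for step: you obtain the lattice $\frac{\theta}{T2^{n-1}}\mathbb{Z}$ from the fact that every firing level is an integer multiple of $\theta/2^{n-1}$, then compare the two spacings and cross-multiply, exactly as the paper does (the paper also writes the comparison as an equivalence). Your caveat that the lattice claim relies on plain rate decoding $\frac{1}{T}\sum_\tau z[\tau]$, not the $1/t$-weighted differential decoding of Eq.~(\ref{eq:differential-coding}), is a fair clarification of an assumption the paper leaves implicit under ``uniform temporal accumulation.''
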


\begin{proof}
Let $\delta(T,n,\theta) = \theta/(T2^{n-1})$ denote the nominal lattice spacing. Since every admissible firing amplitude is an integer multiple of $\theta/2^{n-1}$, uniform accumulation over $T$ timesteps produces a decoded response on the lattice
$\delta(T,n,\theta)\mathbb Z$. To match the baseline nominal granularity, the DIF configuration must satisfy
\begin{equation}
\frac{\beta\theta_{\mathrm{base}}}
{T_{\mathrm{DIF}}2^{n_{\mathrm{DIF}}-1}}
\le
\frac{\theta_{\mathrm{base}}}
{T_{\mathrm{base}}2^{n_{\mathrm{base}}-1}}
\quad\Longleftrightarrow\quad
\frac{T_{\mathrm{DIF}}}{T_{\mathrm{base}}}
\ge
\beta\,
2^{\,n_{\mathrm{base}}-n_{\mathrm{DIF}}}.
\label{eq:app_tradeoff_proof}
\end{equation}
This relation explicitly characterizes the interaction between temporal depth and firing-level granularity. When  $n_{\mathrm{DIF}}=n_{\mathrm{base}}$, the temporal requirement scales with the calibrated range-reduction factor $\beta$. Increasing
$n_{\mathrm{DIF}}$ further relaxes this requirement, allowing a shorter temporal horizon while preserving the same nominal granularity. This proves Eq.~\ref{eq:app_temporal_threshold_tradeoff}.
\end{proof}
Together, Theorems~\ref{thm:outlier_redistribution} and~\ref{thm:temporal_threshold_tradeoff} provide a mechanism-level explanation for the low-timestep effectiveness of DIF. Dendritic mixing reduces channel-wise activation concentration while preserving representation energy, and the resulting compact firing range improves the temporal--threshold trade-off of the somatic encoder. The corresponding gains in ANN--SNN conversion fidelity are further supported by the empirical analyses below.
The resulting improvements in ANN--SNN conversion fidelity are further validated empirically in the following analyses.
\subsection{Calibration Analysis}
\label{app:calibration_analysis}
We further evaluate the effect of distribution-aware calibration on multimodal ANN--SNN conversion. The pretrained ANN parameters remain fixed, while neuronal parameters are calibrated using the activation statistics of each converted module. All variants share the same conversion architecture and temporal budget. We compare uncalibrated conversion, conventional MT calibration, and DIF calibration on the vision encoder and LLaMA decoder.
\begin{table}[htbp]
\centering
\caption{
Effect of calibration on ANN--SNN representation fidelity. Module-wise representation discrepancy is reported under different
simulation timesteps $T$. Lower is better.
}
\label{tab:app_calibration}

\setlength{\tabcolsep}{4.8pt}
\renewcommand{\arraystretch}{1.08}

\begin{tabular}{c|ccc|ccc}
\toprule
& \multicolumn{3}{c|}{Vision Encoder}
& \multicolumn{3}{c}{LLaMA Decoder} \\
\cmidrule(lr){2-4}
\cmidrule(lr){5-7}
$T$
& Uncalib.
& MT-Calib.
& DIF-Calib.
& Uncalib.
& MT-Calib.
& DIF-Calib. \\
\midrule
1 & 1.00 & 1.13 & 1.15
  & 1.30 & 0.48 & 0.52 \\
2 & 1.02 & 0.99 & 0.74
  & 1.31 & 0.40 & 0.36 \\
4 & 1.07 & 0.74 & 0.39
  & 1.26 & 0.31 & 0.22 \\
8 & 1.12  & 0.52 & 0.19
  & 1.24 & 0.24 & 0.11 \\
\bottomrule
\end{tabular}
\end{table}

As shown in Table~\ref{tab:app_calibration}, calibration consistently reduces the ANN--SNN representation discrepancy in both visual and language modules, with DIF providing further improvements over conventional MT calibration. Dendritic redistribution reduces
outlier-induced activation concentration before somatic calibration, allowing the firing range to better match the dominant activation distribution. The resulting gain is particularly evident under limited temporal budgets, demonstrating improved finite-level spike encoding with fewer timesteps.

\subsection{Visualization of Activation Redistribution}
\label{app:activation_visualization}
We further visualize the layer-wise activation distributions to examine the redistribution induced by dendritic mixing. As shown in Fig.~\ref{fig:app_activation_redistribution}, the MLP down-projection inputs exhibit pronounced long-tailed distributions across the LLaMA decoder, with a small subset of channels dominating the activation range. After dendritic mixing, these concentrated responses are redistributed across channels, resulting in consistently more compact activation distributions over network depth.
\begin{figure}[!htbp]
    \centering
    \includegraphics[width=0.98\linewidth]{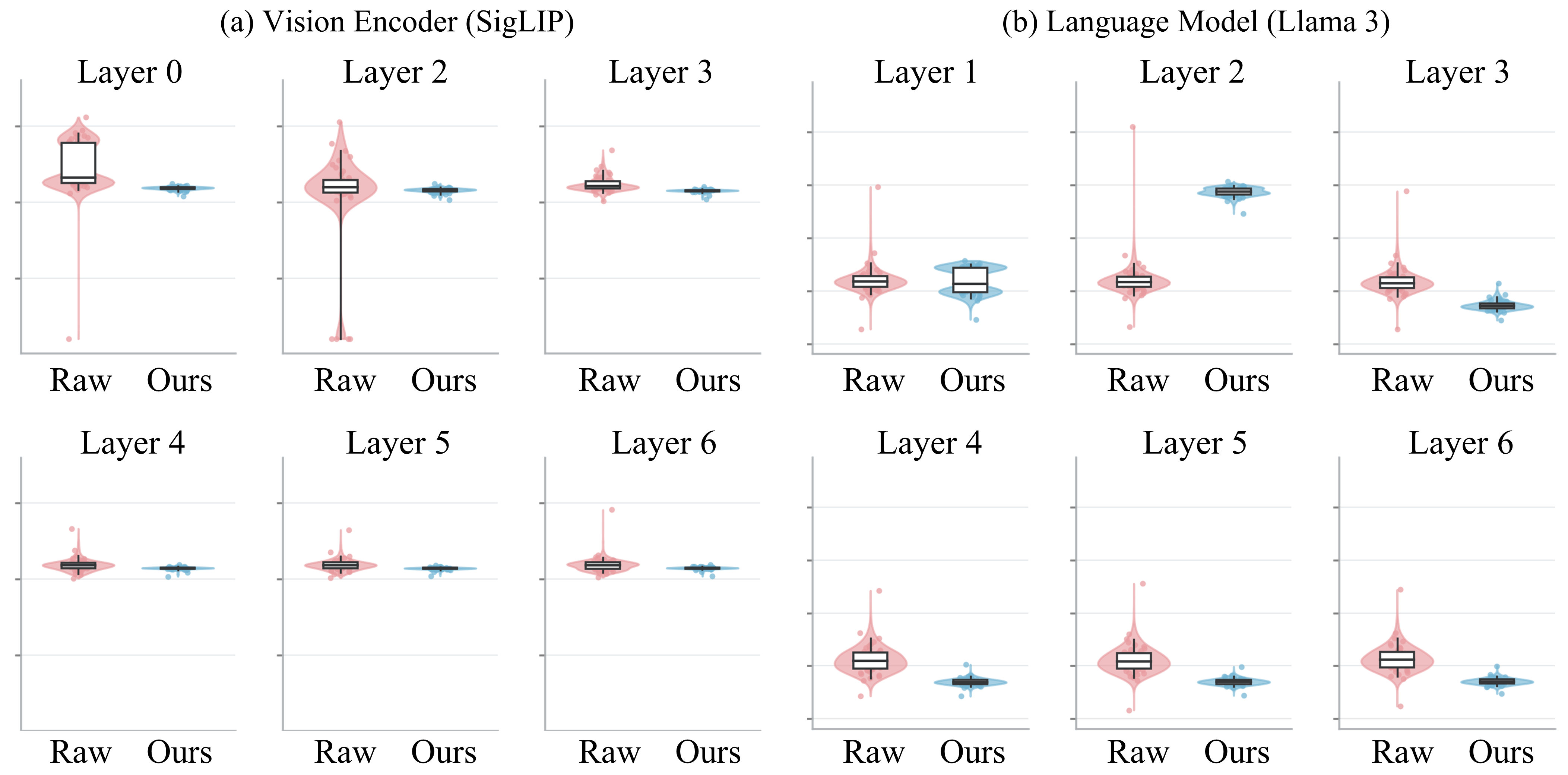}
    \vspace{-10pt}
    \caption{Activation redistribution across LLaMA decoder layers.}
    \label{fig:app_activation_redistribution}
\end{figure}

The layer-wise trend is consistent with Theorem~\ref{thm:outlier_redistribution}: dendritic mixing reduces the concentration of outlier-dominated channel responses while preserving the representation norm. The resulting dynamic range is therefore better
matched to finite-level somatic firing, providing more favorable conditions for accurate spike encoding under constrained temporal and firing-level budgets.

\newpage
\section{Component-Wise Differential Implementation}
\label{app:component_impl}
To clarify the implementation of operation-aware differential conversion, we detail the execution of the major components in Fig.~\ref{fig:methods}(a). We retain the original VLA operators and instantiate differential propagation according to each operator structure. Affine operators propagate incoming increments through reparameterized weights, nonlinear operators evaluate temporal differences between consecutive responses, and multiplicative operators follow the exact bilinear differential expansion. The resulting increments are then encoded by DIF neurons and propagated to subsequent operators.

\begin{figure}
    \centering
    \includegraphics[width=0.98\linewidth]{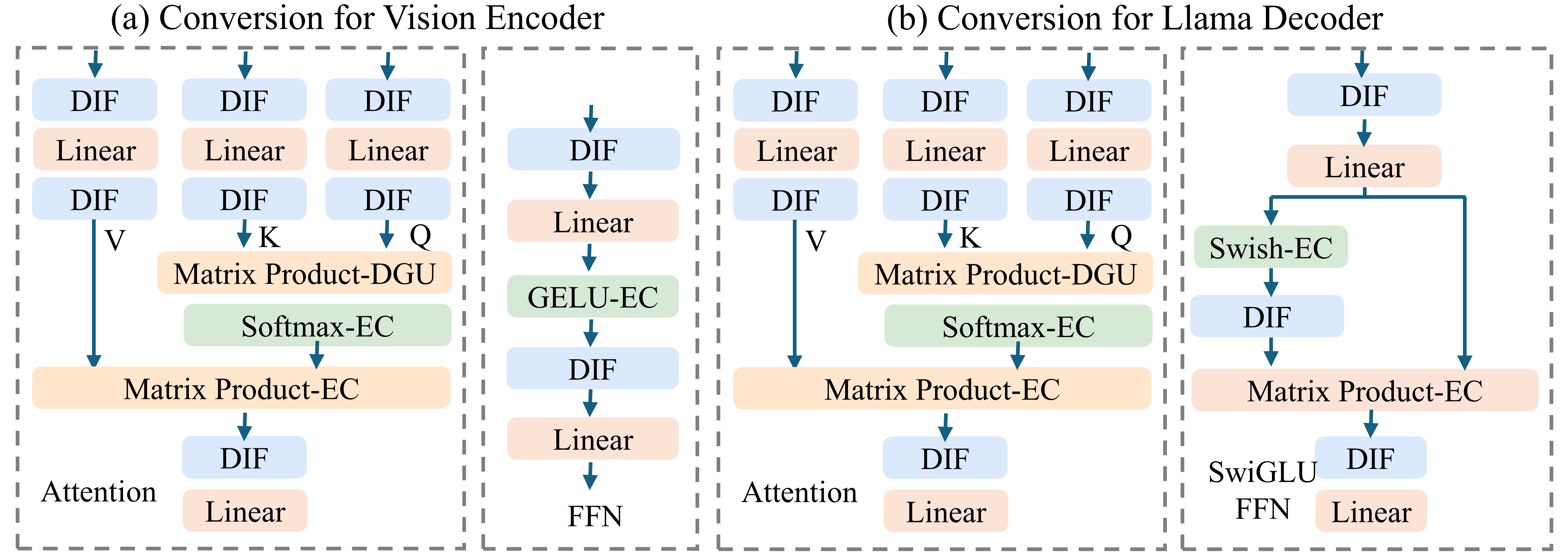}
    \caption{
    Component-wise differential conversion of the vision encoder and LLaMA decoder.
    (a) Vision encoder with DIF-based conversion of attention and feed-forward operations.
    (b) LLaMA decoder with RoPE-based attention and SwiGLU.
    Affine, nonlinear, and multiplicative operators are handled by their corresponding differential formulations.
    }
    \label{fig:placeholder}
    \vspace{-5pt}
\end{figure}

\subsection{Vision Encoder}
\label{app:vision_encoder}
\textbf{Execution Flow:} The vision encoder maps historical and current RGB observations into visual tokens for subsequent multimodal reasoning. We adopt the SigLIP backbone, which first converts each image into patch embeddings and augments them
with positional information. The resulting tokens are then processed by a stack of Transformer blocks, each consisting of normalization, multi-head self-attention (MHSA), feed-forward networks (FFNs), and residual connections. The final visual representations are subsequently fed into the multimodal projector. Formally, the computation of the $l$-th Transformer block is written as follows:
\begin{align}
\mathbf U^{l}
&=
\mathbf X^{l-1}
+
\operatorname{MHSA}
\!\left(
\operatorname{LN}(\mathbf X^{l-1})
\right),
 & 
\mathbf X^{l-1},\mathbf U^{l}
\in\mathbb{R}^{N_v\times d_v},
\label{eq:app_vision_attn}
\\
\mathbf X^{l}
&=
\mathbf U^{l}
+
\operatorname{FFN}
\!\left(
\operatorname{LN}(\mathbf U^{l})
\right),
& 
\mathbf U^{l},\mathbf X^{l}
\in\mathbb{R}^{N_v\times d_v},
\label{eq:app_vision_ffn}
\end{align}
where $\mathbf X^{l-1}$ and $\mathbf X^{l}$ denote the input and output visual tokens of the $l$-th block. We preserve this computation graph and convert its major operators individually as described below.

\textbf{Patch Projection:} The patch embedding maps each image patch into the visual hidden space. Since this operation is affine, its temporal increment is computed as follows:
\begin{equation}
\Delta\mathbf X^{0}[\tau]
=
\Delta\mathbf I[\tau]\widetilde{\mathbf W}_{\mathrm{patch}},
\label{eq:app_patch_proj}
\end{equation}
where $\widetilde{\mathbf W}_{\mathrm{patch}}$ denotes the reparameterized projection weight. Positional embeddings are fixed
within the internal SNN horizon and are incorporated into the initial visual token state.

\textbf{Layer Normalization:} LayerNorm is applied to the reconstructed activation state. Given the cached state $\mathbf X[\tau-1]$ and the incoming increment $\Delta\mathbf X[\tau]$, its differential response can be defined as follows:
\begin{equation}
\Delta\mathbf N[\tau]
=
\operatorname{LN}
\!\left(
\mathbf X[\tau-1]+\Delta\mathbf X[\tau]
\right)
-
\operatorname{LN}
\!\left(
\mathbf X[\tau-1]
\right).
\label{eq:app_vision_ln}
\end{equation}
The resulting temporal increment is then propagated to the subsequent operator.

\textbf{Self-Attention:} 
We preserve the original self-attention operators and apply DIF to the intermediate differential representations. Let \(\widetilde{\mathbf W}_X\) denote the basis-compensated projection weight for \(X\in\{Q,K,V\}\). The normalized increment is first encoded by DIF and projected into the query, key, and value branches, whose output increments are subsequently re-encoded before the attention interaction. These process are defined as follows:
\begin{equation}
\mathbf S_N[\tau]
=
\operatorname{DIF}_{\psi_N}\!\left(\Delta\mathbf N[\tau]\right),
\qquad
\mathbf S_X[\tau]
=
\operatorname{DIF}_{\psi_X}\!\left(
\mathbf S_N[\tau]\widetilde{\mathbf W}_X
\right),
\qquad
\widetilde{\mathbf W}_X
=
\mathbf Q_N^{-1}\mathbf W_X,\;
\label{eq:app_vision_qkv}
\end{equation}
Here, \(\widetilde{\mathbf W}_X\) denotes the reparameterized projection weight that compensates for the basis transformation introduced by the preceding DIF neuron. The DIF-encoded query and key increments are then propagated through the scaled dot-product attention. Following the bilinear differential rule, the attention-score increment is follows:
\begin{equation}
\Delta\mathbf A[\tau]
=
\frac{1}{\sqrt{d_h}}
\left(
\mathbf S_Q[\tau]\mathbf K^{\top}[\tau-1]
+
\mathbf Q[\tau-1]\mathbf S_K^{\top}[\tau]
+
\mathbf S_Q[\tau]\mathbf S_K^{\top}[\tau]
\right),
\label{eq:app_vision_qk}
\end{equation}
where \(d_h\) denotes the dimension of each attention head, and \(\mathbf Q[\tau-1]\) and \(\mathbf K[\tau-1]\) denote the accumulated query and key states from previous timesteps. The resulting attention-score increment is subsequently propagated through the original Softmax operator, whose temporal response is computed from consecutive score states. It can be defined as follows:
\begin{equation}
\Delta\mathbf P[\tau]
=
\operatorname{Softmax}
\!\left(
\mathbf A[\tau-1]+\Delta\mathbf A[\tau]
\right)
-
\operatorname{Softmax}
\!\left(
\mathbf A[\tau-1]
\right),
\quad \!\!
\mathbf S_P[\tau]
=
\operatorname{DIF}_{\psi_P}
\!\left(
\Delta\mathbf P[\tau]
\right).
\label{eq:app_vision_softmax}
\end{equation}
The DIF-encoded attention probability increment is then propagated through the value interaction. Following the same bilinear differential rule, the resulting attention-output increment is defined as:
\begin{equation}
\Delta\mathbf O[\tau]
=
\mathbf S_P[\tau]\mathbf V[\tau-1]
+
\mathbf P[\tau-1]\mathbf S_V[\tau]
+
\mathbf S_P[\tau]\mathbf S_V[\tau],
\label{eq:app_vision_pv}
\end{equation}
where \(\mathbf P[\tau-1]\) and \(\mathbf V[\tau-1]\) denote the accumulated attention probability and value states, respectively.
The resulting increment is encoded by DIF before the output projection. Using the same basis-compensated affine mapping, these process is are defined as follows:
\begin{equation} 
\mathbf S_O[\tau]
=
\operatorname{DIF}_{\psi_O}
\!\left(
\Delta\mathbf O[\tau]
\right),
\qquad
\Delta\mathbf H_{\mathrm{attn}}[\tau]
=
\mathbf S_O[\tau]\widetilde{\mathbf W}_O,
\qquad
\widetilde{\mathbf W}_O
=
\mathbf Q_O^{-1}\mathbf W_O .
\label{eq:app_vision_output}
\end{equation}
The resulting attention increment is propagated through the original residual connection by direct temporal accumulation. We next apply the differential conversion principle to the FFN.

\textbf{Feed-Forward Network:} The FFN preserves its original affine--nonlinear structure. For a pointwise activation $\phi(\cdot)$, its temporal increment is computed from consecutive input states:
\begin{equation}
\Delta\mathbf Y_{\phi}[\tau]
=
\phi\!\left(
\mathbf X[\tau-1]+\Delta\mathbf X[\tau]
\right)
-
\phi\!\left(
\mathbf X[\tau-1]
\right).
\label{eq:app_vision_act}
\end{equation}
The surrounding affine projections use basis-compensated weights, while each nonlinear increment is encoded by DIF before entering the subsequent projection. The resulting FFN increment is accumulated through the residual connection to update the visual token state.

\subsection{Multimodal Projector}
\label{app:projector}
\textbf{Projection:} The multimodal projector maps the visual representations into the hidden space of the language decoder. We retain the original MLP structure and apply the same differential conversion to its affine and nonlinear
operators. For a two-layer projector, the computation can be expressed as:
\begin{equation}
\mathbf S_V[\tau]
=
\operatorname{DIF}_{\psi_V}
\!\left(
\Delta\mathbf V[\tau]
\right),
\qquad
\mathbf S_H[\tau]
=
\operatorname{DIF}_{\psi_H}
\!\left(
\mathbf S_V[\tau]\widetilde{\mathbf W}_1
\right),
\qquad
\Delta\mathbf Z[\tau]
=
\mathbf S_G[\tau]\widetilde{\mathbf W}_2,
\label{eq:app_projector}
\end{equation}
$\mathbf S_G[\tau]$ denotes the DIF-encoded differential response of the intermediate activation. Its nonlinear response follows the pointwise differential formulation in Eq.~\ref{eq:app_vision_act}. Thus, the original projector architecture is preserved while its visual representations are propagated in differential form.

\textbf{Token Assembly:}
The projected historical and current visual tokens are concatenated with the language embedding to construct the multimodal input sequence. Since the instruction remains static within the SNN inference horizon, its embedding is included in the initial state and contributes no temporal increment. The multimodal increment is therefore formulated as:
\begin{equation}
\Delta\mathbf X_0[\tau]
=
\left[
\Delta\mathbf Z^{h}[\tau];
\Delta\mathbf Z^{c}[\tau];
\mathbf 0
\right],
\label{eq:app_token_assembly}
\end{equation}
$\Delta\mathbf Z^{h}[\tau]$ and $\Delta\mathbf Z^{c}[\tau]$ denote the projected increments of historical and current visual tokens. Since concatenation only reorganizes token ordering, it introduces no additional differential operation.

\subsection{LLaMA Decoder}
\label{app:llama_decoder}
\textbf{Execution Flow:} The LLaMA decoder takes the assembled visual and language tokens as input and performs multimodal reasoning to generate the high-level navigation response. The multimodal sequence is processed by a stack of Transformer decoder blocks, each following a pre-normalization architecture with RMSNorm, RoPE-based self-attention, SwiGLU, and residual connections.
Formally, the $l$-th decoder block is written as follows:
\begin{align}
\mathbf U^{l}
&=
\mathbf H^{l-1}
+
\operatorname{Attn}_{\mathrm{RoPE}}
\!\left(
\operatorname{RMSNorm}(\mathbf H^{l-1})
\right),
&
\mathbf H^{l-1},\mathbf U^{l}
&\in\mathbb{R}^{N_m\times d_m},
\label{eq:app_llama_attn}
\\
\mathbf H^{l}
&=
\mathbf U^{l}
+
\operatorname{SwiGLU}
\!\left(
\operatorname{RMSNorm}(\mathbf U^{l})
\right),
&
\mathbf U^{l},\mathbf H^{l}
&\in\mathbb{R}^{N_m\times d_m}.
\label{eq:app_llama_ffn}
\end{align}
Here, $\mathbf H^{l-1}$ and $\mathbf H^{l}$ denote the input and output
multimodal representations of the $l$-th decoder block, respectively,
while $N_m$ and $d_m$ denote the multimodal sequence length and hidden
dimension. The attention module applies rotary positional embeddings to
the query and key representations before scaled dot-product attention,
whereas SwiGLU combines the SiLU-activated gate branch with the up-projection
branch through element-wise multiplication. We retain this computation
graph and apply operation-aware differential conversion to its constituent
operators below.

\textbf{RoPE-Based Self-Attention:}
For the query, key, and value projections, the normalized increment is encoded by DIF and then propagated through the corresponding basis-compensated affine mappings:
\begin{equation}
\mathbf S_N[\tau]
=
\operatorname{DIF}_{\psi_N}
\!\left(
\Delta\mathbf N[\tau]
\right),
\qquad
\mathbf S_X[\tau]
=
\operatorname{DIF}_{\psi_X}
\!\left(
\mathbf S_N[\tau]\widetilde{\mathbf W}_X
\right),
\quad
X\in\{Q,K,V\}.
\label{eq:app_llama_qkv}
\end{equation}
RoPE applies a position-dependent linear rotation to the query and key representations. Since token positions remain fixed over the internal SNN horizon, the corresponding rotations are constant across timesteps and can be applied directly to the differential representations. Thus, RoPE requires no additional differential operator and is omitted from the subsequent notation for clarity. The subsequent query--key interaction, causal Softmax, attention--value interaction, and output projection follow the bilinear, nonlinear, and basis-compensated differential formulations.
Since the causal mask remains fixed over the internal SNN horizon, it is directly retained in the Softmax computation.

\textbf{SwiGLU Feed-Forward Network:} SwiGLU contains parallel gate and up-projection branches followed by an element-wise interaction. Their projection increments are obtained using the basis-compensated affine mappings. For the gate branch, the SiLU
response is follows:
\begin{equation}
\Delta\mathbf G[\tau]
=
\operatorname{SiLU}
\!\left(
\mathbf Z_g[\tau-1]+\Delta\mathbf Z_g[\tau]
\right)
-
\operatorname{SiLU}
\!\left(
\mathbf Z_g[\tau-1]
\right),
\qquad
\mathbf S_G[\tau]
=
\operatorname{DIF}_{\psi_G}
\!\left(
\Delta\mathbf G[\tau]
\right),
\end{equation}
while the up-projection increment is similarly encoded as $\mathbf S_U[\tau]$. The two encoded branches are then combined through
the multiplicative differential rule:
\begin{equation}
\Delta\mathbf M[\tau]
=
\mathbf S_G[\tau]\odot\mathbf U[\tau-1]
+
\mathbf G[\tau-1]\odot\mathbf S_U[\tau]
+
\mathbf S_G[\tau]\odot\mathbf S_U[\tau].
\label{eq:app_llama_swiglu}
\end{equation}
The resulting increment is encoded by DIF and propagated through the basis-compensated down projection before residual accumulation.
\subsection{Low-Level Actor}
\label{app:low_level_actor}
\textbf{Execution Flow:} The low-level actor maps the current locomotion state and high-level navigation command to the desired joint positions. We retain the original MLP architecture, consisting of three hidden layers with ELU activations followed by a linear output layer. Let $\mathbf h^{0}$ denote the actor input. The $l$-th hidden layer and the final joint prediction are written
as follows:
\begin{equation}
\mathbf h^{l}
=
\operatorname{ELU}
\!\left(
\mathbf h^{l-1}\mathbf W_l+\mathbf b_l
\right),
\quad l=1,\ldots,L-1,
\qquad
\mathbf q^{d}
=
\mathbf h^{L-1}\mathbf W_L+\mathbf b_L .
\label{eq:app_actor_flow}
\end{equation}
The original actor computation is preserved, while its affine and nonlinear operators are converted using the differential formulations described below.

\textbf{Differential Actor Conversion:} The actor consists of affine layers and pointwise ELU activations, which are converted using the single-input differential formulation. For the \(l\)-th hidden layer, the affine increment is propagated through the basis-compensated weight and then evaluated by the differential ELU response. These process is defined as follows:
\begin{equation}
\Delta\mathbf Z^{l}[\tau]
=
\mathbf S^{l-1}[\tau]\widetilde{\mathbf W}_{l},
\quad
\Delta\mathbf H^{l}[\tau]
=
\operatorname{ELU}
\!\left(
\mathbf Z^{l}[\tau-1]+\Delta\mathbf Z^{l}[\tau]
\right)
-
\operatorname{ELU}
\!\left(
\mathbf Z^{l}[\tau-1]
\right).
\label{eq:app_actor_hidden}
\end{equation}
Thus, the low-level action is progressively refined as differential responses are accumulated over internal timesteps, while the original actor and downstream joint-control interfaces remain unchanged.

\newpage
\section{Visualization of Navigation Trajectories}
\label{app:qualitative}
We provide additional qualitative examples to evaluate SpikingVLA across diverse navigation scenarios. As shown in Fig.~\ref{fig:app_navigation_examples}, SpikingVLA generates coherent action sequences across different environments, instructions, and trajectories. These results complement the quantitative evaluation by demonstrating consistent navigation behavior on representative unseen episodes.

\begin{figure}[!htbp]
    \centering
    \includegraphics[width=0.98\linewidth]{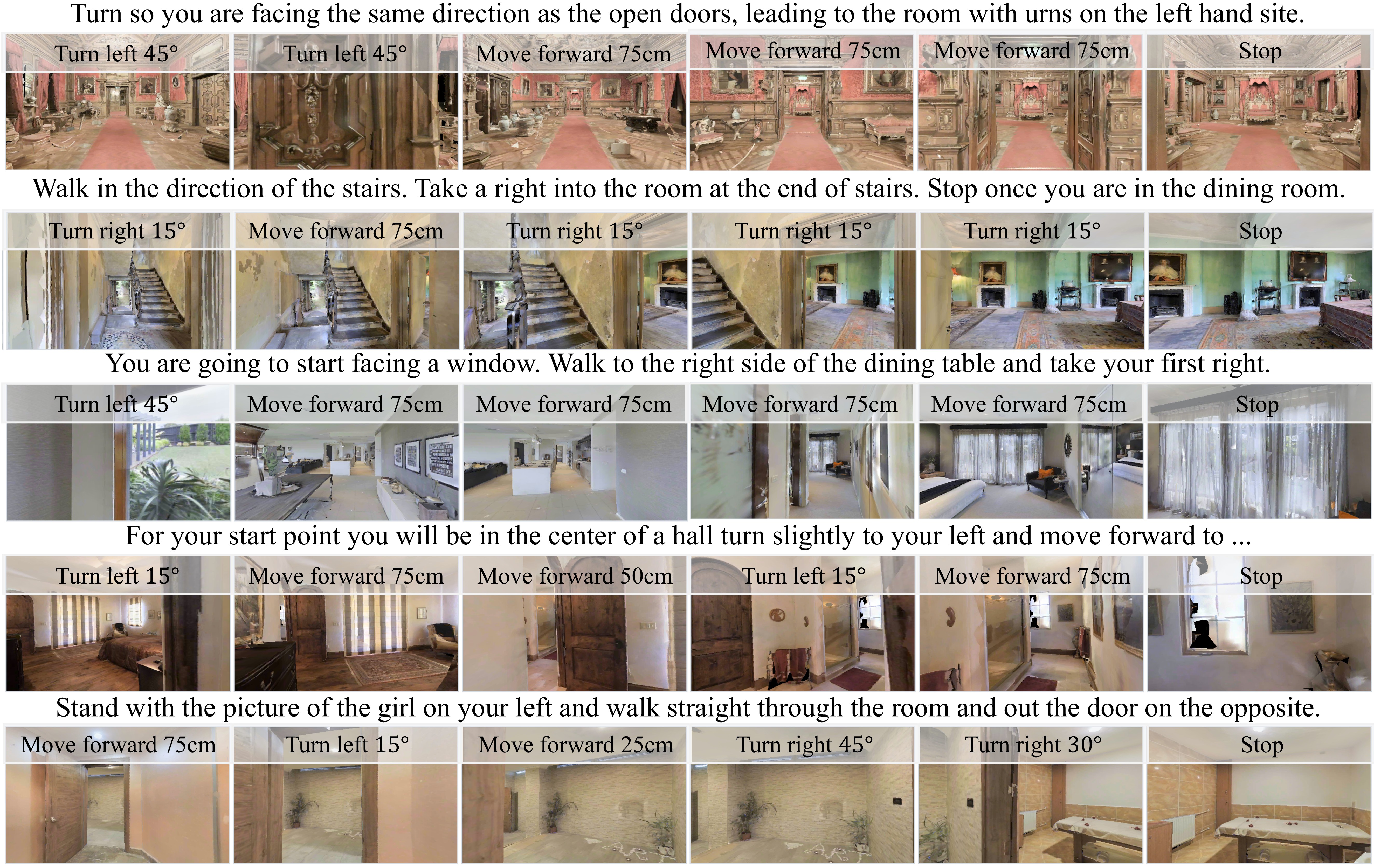}
    \caption{Representative navigation rollouts of SpikingVLA.}
    \label{fig:app_navigation_examples}
\end{figure}
We further examine the temporal evolution of action predictions over internal SNN timesteps. As shown in Fig.~\ref{fig:app_action_refinement}, predictions progressively approach the corresponding ANN decisions as differential responses accumulate. Different inputs exhibit distinct convergence rates, indicating input-dependent temporal requirements for action generation.

\begin{figure}[!htbp]
    \centering
    \includegraphics[width=0.98\linewidth]{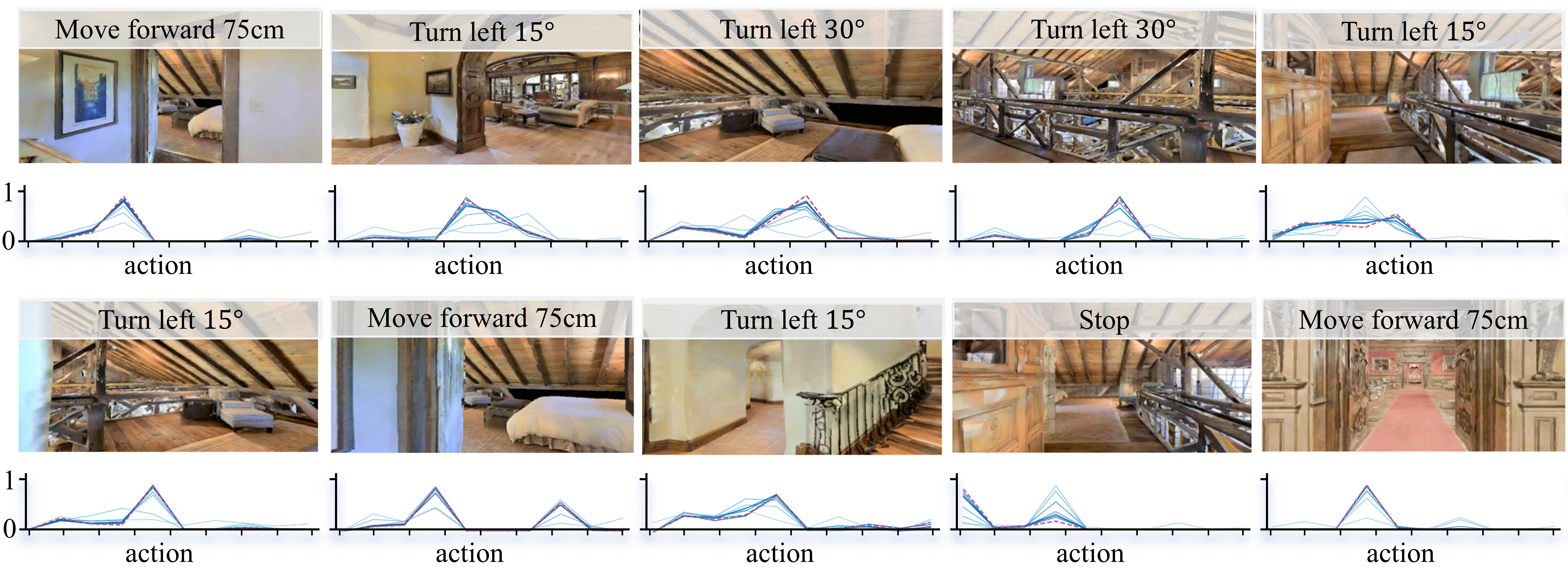}
    \caption{Temporal convergence of action predictions toward the corresponding ANN outputs across internal timesteps, with different inputs exhibiting distinct convergence rates.
    }
    \label{fig:app_action_refinement}
\end{figure}

To quantify this temporal behavior, we record the first timestep at which
each sample satisfies the stopping criterion. With a maximum horizon of
$T=8$, SpikingVLA requires only $4.1$ timesteps on average, corresponding
to a $48.8\%$ reduction in the effective temporal budget. The variation
in stopping times further indicates that different inputs require
different amounts of temporal evidence, allowing unnecessary updates to
be avoided once the action prediction has stabilized.

\end{document}

%% file: math_commands.tex
\usepackage{amsmath,amsfonts,bm}

\def\eqref#1{equation~\ref{#1}}

\def\1{\bm{1}}

\DeclareMathAlphabet{\mathsfit}{\encodingdefault}{\sfdefault}{m}{sl}
\SetMathAlphabet{\mathsfit}{bold}{\encodingdefault}{\sfdefault}{bx}{n}

